\def\eqref#1{equation~\ref{#1}}
\def\ceil#1{\lceil #1 \rceil}
\def\1{\bm{1}}
\DeclareMathAlphabet{\mathsfit}{\encodingdefault}{\sfdefault}{m}{sl}
\SetMathAlphabet{\mathsfit}{bold}{\encodingdefault}{\sfdefault}{bx}{n}
\newcommand{\R}{\mathbb{R}}
\newcommand{\din}{d_{\mathrm{in}}}
\newcommand{\dout}{d_{\mathrm{out}}}
\newcommand{\dblock}{d_{\mathrm{block}}}
\newcommand{\ploss}{\mathcal{L}_{W,X}}
\newcommand{\niters}{n_{\mathrm{iters}}}
\newcommand{\bloss}{\ell_X} 
\newcommand{\params}{\theta} 
\newcommand{\deltaW}{\Delta W} 
\newcommand{\compW}{\hat{W}}
\newcommand{\origW}{W}
\newcommand{\normW}{\bar{W}}
\newcommand{\gr}{\rowcolor[gray]{.95}}
\newcommand{\diag}[1]{\mathrm{diag}\left(#1\right)}
\newcommand{\blockM}[2]{#1^{\left(#2\right)}}
\newcommand{\blockMT}[2]{#1^{\left(#2\right)T}}
\newcommand{\norm}[3]{\left \lVert #1 \right \rVert_{#2}^{#3}}
\newcommand{\group}[3]{{#1}_{#2,\left[#3\right]}}
\newcommand{\sparseW}{\compW}
\newcommand{\itervar}[2]{(#1)_{#2}}
\newcommand{\st}{\mathrm{s.t.}}
\definecolor{darkcandyapplered}{rgb}{0.64, 0.0, 0.0}
\newcommand{\additionalred}[1]{{#1}}
\newcommand{\method}{ARMOR\xspace}
\newcommand{\methodLong}{\textbf{\method}: (\textbf{A}daptive \textbf{R}epresentation with \textbf{M}atrix-fact\textbf{OR}ization)}
\newcommand{\methodFACT}{\method factorization\xspace}
\title{\method: High-Performance Semi-Structured Pruning via Adaptive Matrix Factorization}
\author{Lawrence Liu$^1$ \thanks{ Correspondence to \texttt{lawrencerliu@ucla.edu}}\quad Alexander Liu$^2$\quad Mengdi Wang$^3$\quad Tuo Zhao$^4$\quad
\textbf{ Lin F. Yang$^1$}\\
$^1$\it{University of California, Los Angeles} \quad$^2$\it{University of Washington}\\$^3$\it{Princeton University} \quad$^4$\it{Georgia Institute of Technology} 
}
\begin{document}

\maketitle

\begin{abstract}
Large language models (LLMs) present significant deployment challenges due to their immense computational and memory requirements. While semi-structured pruning, particularly 2:4 sparsity, offers a path to practical hardware acceleration, existing methods often incur substantial performance degradation. To bridge this gap, we introduce \methodLong, a novel one-shot post-training pruning algorithm. Instead of directly pruning weights, \method factorizes each weight matrix into a 2:4 sparse core wrapped by two low-overhead, block diagonal matrices. These wrappers act as efficient pre- and post-transformation error correctors, offering greater flexibility to preserve model quality compared to conventional 2:4 pruning techniques.  The sparse core and block diagonal wrappers are chosen through a block coordinate descent algorithm that minimizes a layer-wise proxy loss. We prove this optimization is guaranteed to converge to a solution with a proxy loss less than or equal to state-of-the-art pruning algorithms. Experiments on Llama \citep{touvron2023llama, dubey2024llama} and Qwen \citep{yang2025qwen3} model families demonstrate that \method consistently and significantly outperforms state-of-the-art 2:4 pruning methods across a wide range of downstream tasks and perplexity evaluations, and generalizes to provide improvements for general N:M patterns and unstructured sparsity. \method achieves this superior performance while retaining the inference speedups and substantial memory usage reductions of 2:4 pruning, establishing a more effective trade-off between model compression and task accuracy. 
\end{abstract}
\section{Introduction}
Large Language Models (LLMs) have demonstrated remarkable capabilities \citep{park2023generative,huang2025gemini}, yet their immense computational and memory requirements pose significant barriers to practical deployment. As a result, techniques for reducing model sizes and computational costs while retaining performance are of significant interest for the research community. A particular area of focus are one-shot post training compression techniques, a highly efficient model compression regime where already trained models are compressed in a single pass without iterative fine-tuning. Pruning, the removal of model parameters is a particularly compelling avenue of compression as it offers a direct path to inference acceleration via dedicated hardware support for specific sparsity patterns \citep{kwon2022fast}, and its benefits can be compounded with orthogonal methods like quantization \citep{frantar2022gptq, lin2024awq, li2025icquant}. However, a critical trade-off plagues existing pruning techniques. Methods capable of delivering tangible inference acceleration do so by sacrificing significant model accuracy, while the most accurate techniques offer largely theoretical speedups. Bridging this gap is the central focus of our work.\par
Pruning algorithms can be broadly divided into three categories: structured, unstructured, and semi-structured pruning. Structured pruning removes entire weight structures, such as rows or columns of weight matrices \citep{ashkboos2024slicegpt}, attention heads \citep{ma2023llm}, or even full layers \citep{men2024shortgpt}. This coarse-grained approach is highly compatible with existing hardware and software, as it results in smaller, dense matrices that can be processed efficiently by standard libraries, leading to direct improvements in inference speed \citep{ma2023llm}. However, this rigidity comes at a cost; by removing large, contiguous blocks, structured pruning can lead to a significant degradation in model accuracy and often has a lower limit on the achievable sparsity before performance collapses \citep{ashkboos2024slicegpt}.\par
At the other end of the spectrum, unstructured pruning offers the highest flexibility by removing individual weights from any part of the model. Leading unstructured pruning algorithms have shown that it is possible to remove up to 50\% of the weights from an LLM with minimal loss in performance \citep{frantar2023sparsegpt}. However, the resulting irregular, sparse matrices disrupt the parallel processing capabilities of modern hardware like GPUs, which are optimized for dense matrix operations. Thus it is difficult to  translate the theoretical model size reductions into a practical inference speedup \citep{xia2023flash}.\par
To bridge the gap between hardware efficiency and model performance, semi-structured pruning has emerged as a compelling compromise. A popular variant is N:M sparsity, which enforces a regular pattern by ensuring that within any contiguous block of M weights, only N weights are non-zero. This regularity is key, as it can be directly accelerated by specialized hardware. For instance, NVIDIA's Ampere and subsequent GPU architectures provide native support for 2:4 semi-structured sparsity, which can theoretically double the throughput of matrix operations \citep{hu2024accelerating, mishra2021accelerating}. However, the constraint of a fixed pruning pattern within each small block restricts the algorithm's ability to retain the most critical weights, leading to a significantly increased drop in performance. For example, applying a state-of-the-art 2:4 pruning method to Llama-7B increases Wikitext2 perplexity by nearly 59\% over its 50\% unstructured counterpart, creating an undesirable choice between theoretical efficiency and practical accuracy \citep{sun2023simple}.\par 
\begin{figure}[tb]
    \centering
    \includegraphics[width=0.8\linewidth]{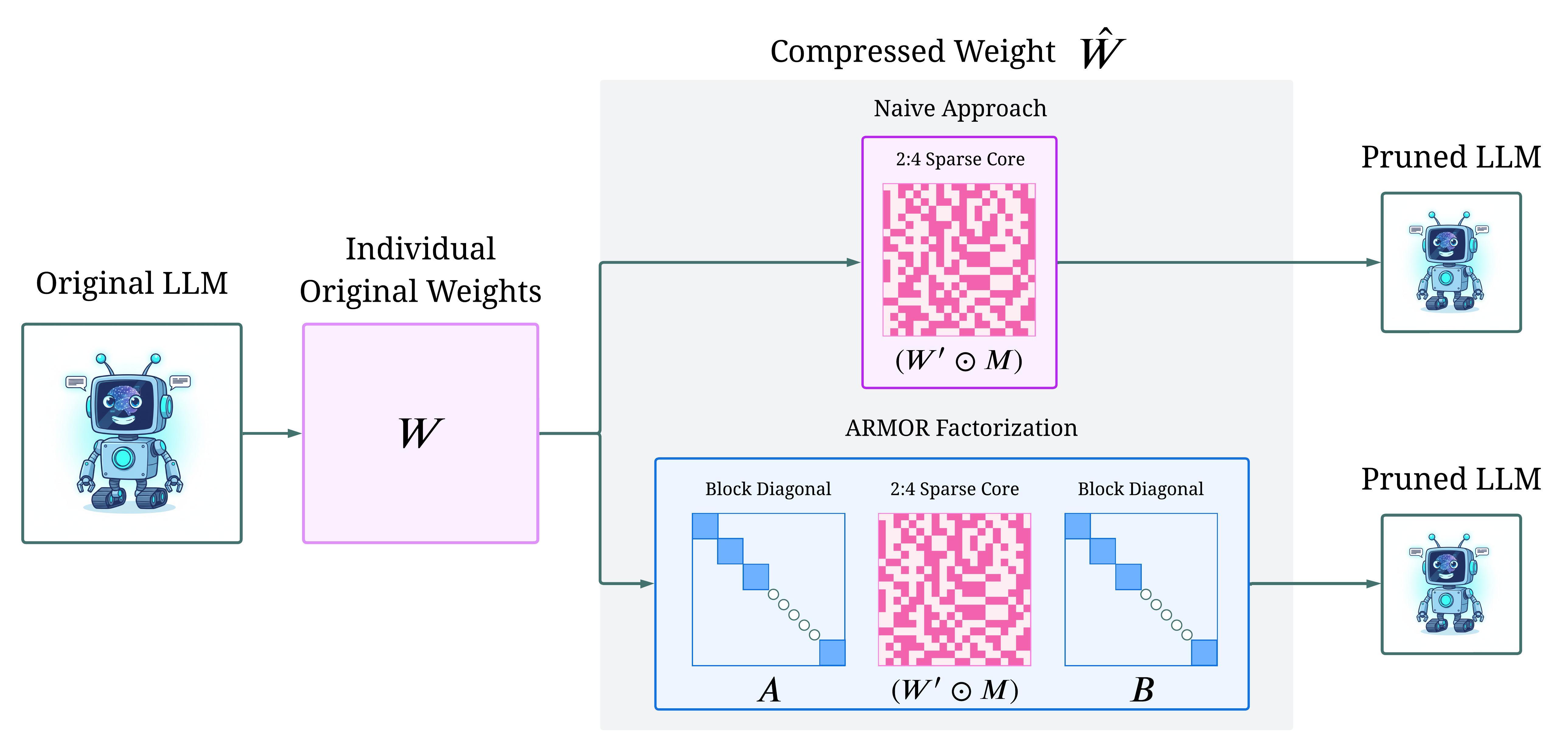}
    \caption{Illustration of proposed \method factorization. For a given LLM, each weight matrix $W$ is pruned individually. Instead of naively pruning the weight matrix, \method wraps the sparse core with a pair of block diagonal matrices and uses a unique optimization algorithm to find the optimal structured pruning mask. $M\in\{0,1\}^{\dout\times\din}$ represents the 2:4 binary mask.
    }
    \vspace{-5pt}
    \label{fig:overview}
\end{figure}
In this work, we seek to close this performance gap by introducing \methodLong, an theoretically grounded one-shot model pruning algorithm. We reframe semi-structured pruning as a matrix factorization problem. Instead of directly pruning weights, our key insight is to factor each weight matrix into a constrained sparse core that adheres to the 2:4 hardware pattern, pre- and post-multiplied by lightweight block diagonal matrices, this factorization is illustrated in Figure \ref{fig:overview}. These block diagonal matrices, which are highly parameter-efficient due to their sparse structure (containing only $O(N)$ parameters compared to $O(N^2)$ for a dense matrix), can be multiplied with activations efficiently on modern hardware. They act as learned, low-overhead linear transformations that rotate the activation and weight spaces into a basis where the 2:4 pruning constraint is less lossy, providing enhanced flexibility and preserving model quality more effectively than naive 2:4 pruning algorithms.\par
Through extensive experiments on Llama \citep{dubey2024llama,touvron2023llama} and Qwen \citep{yang2025qwen3} family models, we show that \method consistently and significantly outperforms existing 2:4 pruning methods on both perplexity and downstream tasks. For example, on Llama-2-13B, \method reduces the perplexity gap between the 2:4 pruned model and the dense original by almost 50\%. We show that these accuracy gains are achieved while preserving the practical inference speedups and memory reduction inherent to native 2:4 sparsity, and that \method generalizes to general N:M and unstructured sparsity with consisted improvements over SOTA. Our work suggests that rethinking the fundamental representation of weights, rather than simply removing them, is a promising direction for future hardware-software co-design in efficient deep learning.

\subsection{Related Work}
Existing one-shot unstructured/semi-structured pruning methods largely formulate the compressed weight matrix $\compW$ as the element-wise product of a dense matrix $W'$ and a binary mask $M$, where $\compW = W' \odot M$. Research within this paradigm focuses on two primary challenges: identifying an optimal mask $M$ (importance scoring) and updating the unpruned weights in $W'$ to compensate for the removed connections. The simplest approaches are weight-update-free methods like Wanda, which fix the unpruned weights to their original values ($W'=\origW$) and focus on finding a good mask M through element-wise metrics \citep{sun2023simple, liu2025nowag, liubawa, dong2024pruner,das2023beyond,zhang2024plug}.
In contrast, more complex weight-update methods such as SparseGPT \citep{frantar2023sparsegpt} aim for higher accuracy by iteratively pruning weights while simultaneously adjusting the remaining ones to minimize a proxy reconstruction loss, often based on a Hessian sketch. While effective, this introduces significant computational overhead, such as the costly inversion of the Hessian sketch matrix. Both methods suffer significantly increased performance loss when applied to 2:4 semi-structured pruning compared with their unstructured counterparts. \par
\additionalred{This gap has led to alternative formulations beyond elementwise masking. These included sparse matrix factorization approaches such as DSF \citep{bovza2024two}, decomposition based works like WRP, and hybrid low-rank structures such as LoSparse \citep{li2023losparse} and Targeted Low Rank Refinement \citep{shen2025targeted}. However such approaches suffer from their own challenges, DSF and outlier-adaptive schemes like OWL \citep{yin2023outlier} are incompatible with accelerated 2:4 kernels. Similarly, WRP's performance is bottlenecked by its reliance on a highly unstructured sparse matrix component, for which no efficient matrix-multiplication kernels exist. Directly learnable approaches such as LoSparse and MaskLLM \citep{fang2024maskllm} typically rely on expensive iterative retraining, for instance MaskLLM uses $\sim$4000$\times$ more data and $\sim$80$\times$ more compute  than ARMOR. Finally, rotation-based methods such as RotPruner \citep{chen2025rotpruner} and DenoiseRotator \citep{gu2025denoiserotator} introduce fixed overheads. Unlike \method’s tunable block-diagonal wrappers, these rotations cannot be easily adjusted to balance accuracy and latency, leaving a gap between high compression performance and tangible speedups.}

\section{Problem Statement}
To remain computationally tractable, one-shot compression pruning methods regularly adopt a layer-by-layer framework. For a given linear layer with weight matrix $\origW \in \R^{\dout\times \din}$, the objective is to find a compressed representation $\sparseW$ that minimizes a data-aware proxy loss, $\ploss(\sparseW)$. This loss quantifies the approximation error using a small calibration dataset $X^T \in \mathbb{R}^{n \times \din}$. The exact formulation of this proxy loss is a key design choice and varies across different pruning algorithms, we define our proxy loss in Section \ref{sec:obj}.

We constrain $\sparseW$ to leverage a 2:4 semi-structured sparse component to guarantee hardware acceleration. This requires that the underlying binary mask $M \in \{0,1\}^{\dout\times\din}$ defining this component's pattern has exactly two non-zero entries in every group of four consecutive columns per row. Formally we express this constraint as $\norm{\group{M}{i}{k}}{0}{} = 2, \quad \forall i \in [d_{out}], k \in [d_{in}/4]$ where we adopt $\group{M}{i}{k} =M_{i,4(k-1)+1:4k}$ as shorthand. The general layer-wise optimization problem is therefore:
$$\min_{\text{params of }\sparseW} \ploss(\sparseW) \quad \mathrm{s.t.}\quad\norm{\group{M}{i}{k}}{0}{} = 2, \quad \forall i \in [d_{out}], k \in [d_{in}/4]$$
Regardless of optimization algorithm, this layer-wise approach is inherently one-shot because the compression is completed in a single pass over the network's layers without any global retraining.

\section{Methods}
In this section we introduce the \method pruning process. Section \ref{sec:fact} introduces the \method factorization and notation. In Section \ref{sec:obj} we discuss the proxy loss optimization objective and initialization. In Section \ref{sec:opt} we introduce the \method optimization algorithm to optimize the \method factorization. This algorithm consists of two alternating steps, the continuous parameter update step, Section \ref{sec:Cont}, and the sparse core update step, Section \ref{sec:Disc}. Finally we introduce the main theoretical result in \ref{sec:Theory}.
\subsection{\method Matrix Factorization}
\label{sec:fact}
For each layer in an LLM, let the original weight matrix be $\origW \in \R^{\dout\times \din}$. Our approach seeks to find a compressed matrix, $\compW \in \R^{\dout\times \din}$, with the following factorization:
\begin{equation}
    \compW(A,B,W',M) := A \cdot(W' \odot M) \cdot B,
\end{equation}
where the goal is to optimize for the parameters $A, B, W',$ and $M$. Specifically, $W' \in \R^{\dout\times \din}$ is a dense matrix representing transformed weights, and $M \in \{0,1\}^{\dout\times\din}$ is a binary mask that imposes sparsity through element-wise multiplication ($\odot$). The matrices $A \in \R^{\dout\times\dout}$ and $B \in \R^{\din\times\din}$ are block-diagonal. The block size, $\dblock$, is a chosen hyperparameter, selected such that it  divides both $\dout$ and $\din$. We refer to the set of all learnable parameters as $\theta=(A,B,W',M)$.

The key to \method is the diagonal matrix wrappers, $A$ and $B$, that surround the sparse ``core'' $W' \odot M$. Compared to the naive approach of directly pruning or diagonal only wrappers, these block diagonal matrix wrappers offer additional flexibility, while having low overhead and existing implementations on hardware for storage and inference. We can store $A$ and $B$ as tensors of size $(\dout/\dblock) \times \dblock \times \dblock$ and $(\din/\dblock) \times \dblock \times \dblock$ respectively. Matrix multiplication at inference time can be performed as batched matrix multiplication. As a result the overhead of storing $A$ and $B$ and performing inference grows with $\mathcal{O}((\dout+\din)\dblock)$, which is sublinear to the number of original parameters in the layer, $\dout\din$.
\paragraph{Notation} We denote the individual blocks of $A$ and $B$ as $\blockM{A}{i} \in \R^{\dblock \times \dblock}$ and $\blockM{B}{j} \in \R^{\dblock\times\dblock}$, ie: $A=\diag{\blockM{A}{1}, \blockM{A}{2}, ..., \blockM{A}{\dout/\dblock}}$ and likewise for $B$. More generally for any matrix $C \in \R^{\dout \times \din}$ we denote the $\dblock \times \dblock$ matrix blocks as $\blockM{C}{i,j}$, ie $\blockM{C}{i,j}=C_{(i-1)\dblock+1:i\dblock,(j-1)\dblock+1:j\dblock} \quad \forall i \in [\dout/\dblock], \quad j \in [\din/\dblock]$. A detailed illustration of this notation can be found in appendix \ref{app:notation}.\par 
\subsection{Optimization Objective}
\label{sec:obj}
We optimize $\compW$ by minimizing the NoWag layerwise proxy loss \citep{liu2025nowag}.
\begin{equation}
    \ploss(\theta) = \| \normW- \compW \|_{F,\diag{XX^T}}^2
    := \sum_{i}\sum_{j} (\normW_{ij}-\compW_{ij})^2 \|X_{j}\|_{2}^2,
    \label{eq:proxy_loss}
\end{equation}
where $\normW \in \R^{\dout\times\din}$ is the row and column normalized version of $\origW$:
\[
\normW_{ij} = \frac{1}{r^{(2)}_i} \left(\frac{\origW_{ij}}{r^{(1)}_j}\right), \quad  r^{(1)}_j = \sqrt{\sum_{i = 1}^{d_{\text{out}}} \origW_{ij}^2}, \quad \forall j \in [d_{\text{in}}], \quad
r^{(2)}_i = \sqrt{\sum_{j = 1}^{d_{\text{in}}} \left(\frac{\origW_{ij}}{r^{(1)}_j}\right)^2}, \quad \forall i \in [d_{\text{out}}].
\]
We adopt the NoWag objective function due to its data-aware nature and decomposable structure. The $\diag{XX^T}$ term weights the squared Frobenius norm by the magnitude of the corresponding input activations, which focuses the optimization on preserving weights that are most influential for the given calibration data. Furthermore, compared to Hessian sketch based methods, this objective function requires less calibration data.\par 
A key advantage of this objective is that the loss can be decomposed into independent element-wise subproblems. While \method factorization's wrapper matrices prevent a direct element-wise optimization, this structure is still critical as it allows for the problem to be broken down into independent block-level subproblems, which is a cornerstone of our greedy sparse core optimization strategy. After optimizing, we scale $\compW$ back by denormalizing, which is performed by pre-scaling the rows and columns of $A$ and $B$ by $r^{(1)}$ and $r^{(2)}$ respectively before inference. 
\subsection{Optimization Algorithm}
\label{sec:opt}
\begin{algorithm}[h]
\caption{\method Optimization Algorithm}
\label{alg:main_opt}
\begin{algorithmic}[1]
\Require Original weight $W$, calibration data $X$, tolerance $\epsilon$.
\Ensure Optimized factorization parameters $\theta = \{A, B, W', M\}$.
\State \Comment{Initialization}
\State $\overline{W} \gets \text{Normalize}(W)$ \Comment{Using row and column normalization from NoWag}
\State $\itervar{\theta}{0} \gets \text{Initialize}(\overline{W}, X)$ \Comment{Initialize according to Eq \ref{eq:init}}
\For {$t=1,2,3,...,\niters$}
    
    \State $\itervar{A}{t},\itervar{B}{t},\itervar{W'}{t|\mathrm{cont}} \gets \text{ContinuousUpdate}(\itervar{A}{t-1},\itervar{B}{t-1},\itervar{W'}{t-1},\itervar{M}{t-1}, \overline{W}, X)$
    
    \State $\itervar{W'}{t}, \itervar{M}{t} \gets \text{SparseCoreUpdate}(\itervar{A}{t},\itervar{B}{t},\itervar{W'}{t|\mathrm{cont}},\itervar{M}{t-1}, \overline{W}, X)$
\EndFor

\State \Return $A_t, B_t, W'_t, M_t$
\end{algorithmic}
\end{algorithm}
To optimize the \method factorization, we utilize a block coordinate descent \citep{wright2015coordinate} algorithm that alternates between updating the continuous parameters $A,B,W'$ and the sparse core $W'\odot M$. These alternating updates are performed for $\niters$ iterations. The overall algorithm is outlined in Algorithm \ref{alg:main_opt}. A detailed technical description of the optimization algorithm can be found in appendix \ref{app:AlgoInDetail}.
\paragraph{Initialization}
We initialize our factorization $\itervar{\theta}{0}=\left(\itervar{A}{0},\itervar{B}{0},\itervar{W'}{0},\itervar{M}{0}\right)$ as:
\begin{multline}
    \itervar{A}{0}=I, \quad \itervar{B}{0}=I, \quad \itervar{W'}{0}=\normW, \\ \itervar{M_{ij}}{0}=\mathbb{I}\left( \mathcal{I}_{i,j} \in \text{top}_2\left(\group{\mathcal{I}}{i}{c}\right) \right) \forall i\in [\dout], j\in[\din],
    \label{eq:init}
\end{multline}
where $c=\ceil{j/4}$, $\mathcal{I}_{i,j} = \normW_{i,j}^2 \|X_{j}\|_{2}^2$, and $I$ denotes the identity matrix of appropriate size. This initialization is chosen as it is the optimal solution to equation \ref{eq:proxy_loss} if $\compW$ only consisted of the naive sparse core, and is equivalent to the pruning result of NoWag-P pruning algorithm \citep{liu2025nowag}.
\subsubsection{Updating of Continuous Parameters}
\label{sec:Cont}
This step uses sequential gradient descent to update $A$, $B$, and $W'$ sequentially, with the learning rate determined via local $\beta$-smoothness. Algorithm \ref{alg:continuous_opt} depicts a single step in detail. In practice, we replace these sequential steps with a joint Adam \citep{kingma2014adam} optimization that updates $A$, $B$, and $W'$ simultaneously, a choice driven primarily by efficiency. This approach requires only one forward/backward pass per iteration and eliminates the need to recalculate local $\beta$-smoothness at each step. While Adam introduces minor additional memory overhead, this is negligible since we optimize each layer independently. We present the sequential gradient descent version here to provide strict theoretical guarantees of convergence. In practice, however, joint Adam optimization yields no significant differences compared to sequential gradient descent.
\subsubsection{Updating the Sparse Core}
\label{sec:Disc}
\begin{figure}[tb]
    \centering
    \includegraphics[width=0.9\linewidth]{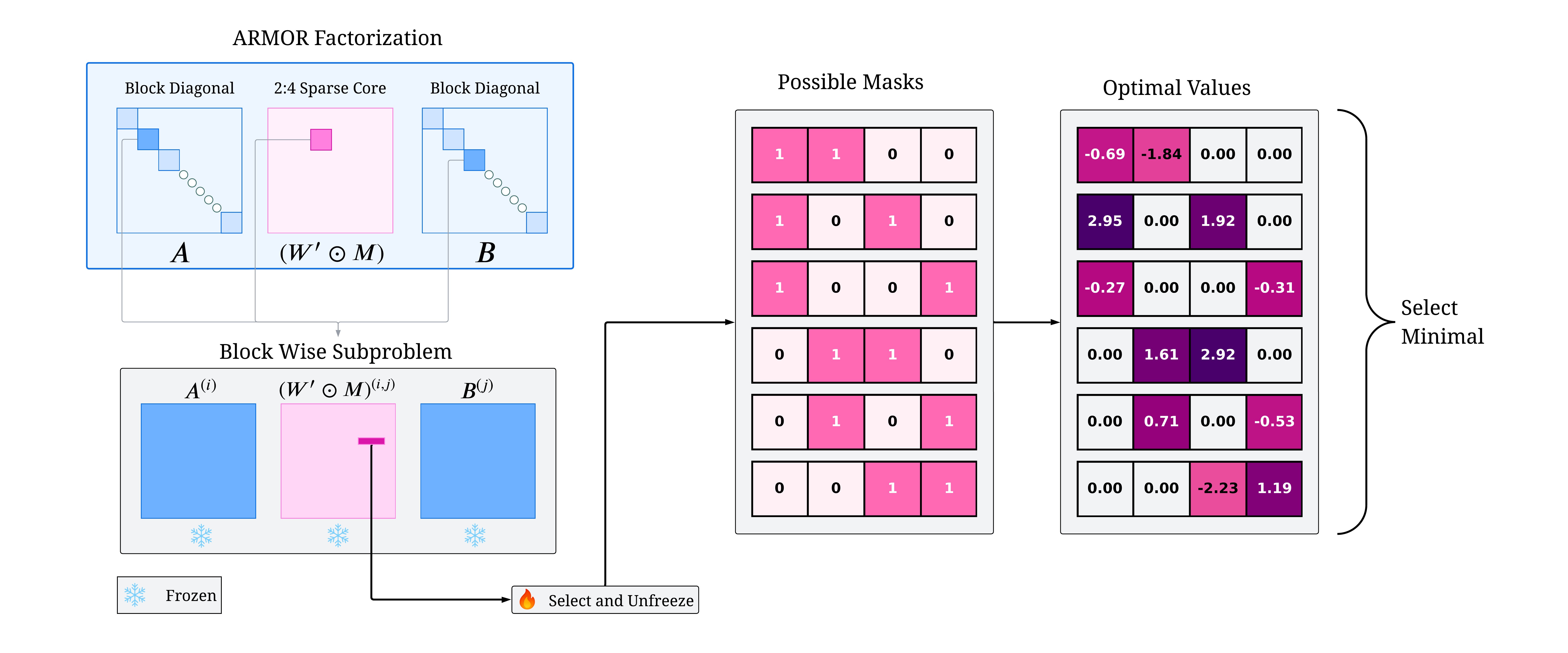}
    \vspace{-1pt}
    \caption{An illustration of the sparse core update step of the \method optimization algorithm}
    \label{fig:discreteOptim}
\end{figure}
This step updates the sparse core $W'\odot M$ to reduce the proxy loss. To avoid the exponential large search space, we adopt a greedy approach, where we select and update a fraction of the elements of the sparse core to reduce the proxy loss. An illustration of the algorithm is depicted in Figure \ref{fig:discreteOptim} and a mathematical description can be found in Appendix \ref{app:DiscInDepth}. Below we elaborate the technical novelty and efficiency of the procedure. 
\paragraph{Leveraging the 2:4 Pattern} If we freeze all of the sparse core beyond a single consecutive sparse group, finding the optimal values for this sparse group can be performed as follows. Since there are only ${4\choose 2} =  6$ possible mask choices for a group, it is computationally tractable to consider each possible mask. For a possible mask choice, finding the proxy loss minimizing values for the 2 non-zero elements of the group is a least squares problem. Thus we can solve the least squares for all 6 possible mask choices and select the one with the smallest minimal proxy loss. 
\paragraph{Leveraging the Elementwise Property of the Proxy Loss} Updating a single 4 element sparse group usually has minimal impact on the layerwise proxy loss, since 4 elements account for less than $0.2 \times 10^{-6}$ of the parameters of a typical LLM layer. To enable updates of more than 4 elements at a time, we leverage the element wise property of the proxy loss. As discussed previously, this allows us to break the optimization problem into broken independent block-level subproblems:
\begin{multline}
    \ploss(\params) = \sum_{i=1}\sum_{j=1} \bloss^{(i,j)}(\blockM{\params}{i,j})\\
    = \sum_{i=1}\sum_{j=1} \norm{\blockM{\normW}{i,j} - \blockM{A}{i}\left(\blockM{W'}{i,j} \odot \blockM{M}{i,j}\right)\blockM{B}{j}}{F,\blockM{\diag{XX^T}}{j}}{2},
\end{multline}
where $i \in [\dout/\dblock], \quad j\in [\din/\dblock]$. Thus we can optimize each block, $\left(\blockM{W'}{i,j} \odot \blockM{M}{i,j}\right)$, independently through the greedy least squares process outlined above. This allows for parallel updates of $(\din \dout)/\dblock^2$ groups at a time, which for a standard LLM translates to on the order of $10^3$ more elements updated at once.
\paragraph{Selecting The Sparse Group} We select the sparse groups randomly, with their selection probabilities weighted by their proxy loss gradient. For block $(i,j)$, the probability $p^{(i,j)}_{(i',k)}$ of selecting group $i',k$ is: 
\[
p^{(i,j)}_{(i',k)} \propto \norm{\group{\left(\nabla_{\blockM{\left(W'\odot M\right)}{i,j}}\bloss^{(i,j)}\right)}{i'}{k}}{1}{} \quad \forall \quad i' \in [\dblock], \quad k\in[\dblock/4].
\]
Such a selection heuristic results in more focus on the important sparse groups through proxy loss gradient weighting. Additionally, the randomness helps prevent selecting the same group over and over again. Empirically we observer that this leads to faster and better convergence of the proxy loss, and thus better overall LLM performance retention. An ablation of this selection heuristic can be found in Appendix \ref{app:Selection}.

\subsection{Theoretical Results}
\label{sec:Theory}
In this section, we establish the following theorem to show that our algorithm guarantees convergence
\begin{restatable}{theorem}{MainTheorem}
    \label{thm:MainTheorem}
    (Convergence of the \method optimization algorithm). The sequence $\{\ploss(\itervar{\theta}{t})\}_{t\geq 0}$ converges and $\ploss(\itervar{\theta}{t})\leq\ploss(\itervar{\theta}{0}) \quad\forall\quad t>0.$
\end{restatable}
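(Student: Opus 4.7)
The plan is to deduce both claims from a monotone convergence argument. First I would observe that the proxy loss in equation \ref{eq:proxy_loss} is a weighted sum of squares and hence bounded below by $0$. Second I would establish the per-iteration monotonicity $\ploss(\itervar{\theta}{t}) \leq \ploss(\itervar{\theta}{t-1})$ separately for the continuous update and the sparse core update. Together these yield a non-increasing sequence bounded below, which converges, and by induction give the uniform upper bound $\ploss(\itervar{\theta}{t}) \leq \ploss(\itervar{\theta}{0})$.

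For the continuous update, I would use the fact that fixing any two of $A$, $B$, $W'$ reduces $\ploss$ to a convex quadratic in the third variable; hence each sub-step is $\beta$-smooth for a $\beta$ computable in closed form from the current iterates and from $\diag{XX^T}$. The standard descent lemma then gives, for the step size $\alpha_t = 1/\beta$,
\begin{equation*}
    \ploss\bigl(x - \alpha_t \nabla_x \ploss(x)\bigr) \;\leq\; \ploss(x) - \frac{1}{2\beta}\,\|\nabla_x \ploss(x)\|_F^2 \;\leq\; \ploss(x),
\end{equation*}
where $x \in \{A, B, W'\}$ denotes the active sub-step variable. Chaining the three sub-steps yields $\ploss(\itervar{\theta}{t|\mathrm{cont}}) \leq \ploss(\itervar{\theta}{t-1})$.

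For the sparse core step, I would invoke the block decomposition $\ploss(\theta) = \sum_{i,j} \bloss^{(i,j)}(\blockM{\theta}{i,j})$ stated in Section \ref{sec:Disc}: because $A$ and $B$ are block diagonal and $\diag{XX^T}$ acts only as a column reweighting, altering $W' \odot M$ inside block $(i,j)$ changes only the $(i,j)$ summand. Within any selected group of four consecutive entries, the procedure enumerates all $\binom{4}{2}=6$ admissible 2:4 masks and solves the induced least-squares problem exactly for the two non-zero values. Because the current mask together with the current values forms one admissible configuration, the minimizer over the six candidates has loss no larger than the current block loss; summing over blocks gives $\ploss(\itervar{\theta}{t}) \leq \ploss(\itervar{\theta}{t|\mathrm{cont}})$, completing the monotonicity argument.

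The main obstacle I expect is the continuous step: one must verify that the $\beta$ used to set $\alpha_t$ genuinely upper bounds the Lipschitz constant of the active sub-step's gradient (for a quadratic this reduces to bounding the top eigenvalue of a matrix built from the current $A$, $B$, $M$, and $X$), and that the gradient update preserves the block-diagonal structure of $A$ and $B$. The latter holds because the off-block-diagonal coordinates do not appear in $\ploss$, so the projected gradient onto the block-diagonal subspace equals the gradient of the restricted parameterization. The sparse core step is the easier half, provided we explicitly retain the current $(\blockM{M}{i,j},\blockM{W'}{i,j})$ within the candidate set so that the greedy selection cannot be strictly worse than taking no action.
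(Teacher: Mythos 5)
Your proposal is correct and follows essentially the same route as the paper: non-negativity of the weighted least-squares objective, per-sub-step descent for $A$, $B$, $W'$ via convexity plus local $\beta$-smoothness with step size $1/\beta$, block-wise additivity plus exhaustive enumeration of the six masks with exact least-squares weights for the sparse core step, and the Monotone Convergence Theorem to conclude. The one caveat you flag---that the current configuration must be dominated by some candidate---is resolved exactly as the paper does it, by noting that the candidate carrying the old mask uses re-optimized weights and hence already has loss no larger than the pre-update block loss.
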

The proof can be found in Appendix \ref{app:MainTheoremProof}. Since \methodFACT initialization is equivalent to NoWag-P, $\ploss(\itervar{\theta}{0})$ is the proxy loss of NoWag-P. Thus \method will perform at least equivalently to NoWag-P a SOTA pruning algorithm, in terms of proxy loss. 
\section{Results}
\textbf{Models and Experimental Setup}
We demonstrate the efficacy of our pruning method on two contemporary model families: Qwen 2.5 (7B, 14B, 32B, and 72B) and Qwen 3 (8B and 14B) \citep{yang2025qwen3}. Our investigation targets foundational base models of 7B parameters and larger. This focus on base models allows for a direct assessment of our algorithm's effect on the core knowledge acquired during pre-training, removing confounding variables introduced by instruction tuning or other post-training modifications. Furthermore, we concentrate on dense architectures, excluding Mixture-of-Experts (MoE) variants, as recent work suggests that MoE models benefit from specialized pruning strategies \citep{xie2024moe,li2025slimmoe}. For all pruning experiments, we configured the block size, $\dblock$, to 128 and executed the proxy loss optimization for 20,000 iterations. Additional implementation details are provided in Appendix \ref{app:impDetails}.\par
\textbf{Evaluation:} To comprehensively assess performance degradation, we employed a two-pronged evaluation strategy. First, to measure practical performance on downstream tasks, we evaluated the pruned Qwen models on a suite of seven industry-standard benchmarks using the LM Eval Harness \citep{eval-harness}. These benchmarks cover a range of capabilities, including commonsense and complex reasoning, mathematical problem-solving, and world knowledge. A detailed description of each benchmark is available in Appendix \ref{app:tasks}. Second, to ensure comparability with the broader model compression literature, which often relies on perplexity metrics, we conducted an additional set of experiments. For this, we pruned models from the Llama-2 (7B, 13B, and 70B) \citep{touvron2023llama} and Llama-3 (8B and 70B) \citep{dubey2024llama} families. We then evaluated their perplexity on the test split of Wikitext2 \citep{merity2016pointer} and a subset of the C4 validation split \citep{Dodge2021DocumentingTE}, following standard evaluation protocols in the field.\par
\textbf{Baselines} We compare \method against 3 leading pruning methods, SparseGPT \citep{frantar2023sparsegpt}, Wanda \citep{sun2023simple}, NoWag-P \citep{liu2025nowag}. Using NoWag-P is of particular interest, since as discussed previously, \method uses the same proxy loss and is initalized at NoWag-P. Therefore comparing \method against NoWag-P servers as an ablation to evaluate the empirical effectiveness of the \method factorization and optimization algorithm. 
\subsection{Task Based Evaluations}
\begin{table}[tb!]
\begin{center}
\setlength{\tabcolsep}{6.5pt} 
\renewcommand{\arraystretch}{1.2}
\small
\resizebox{\textwidth}{!}{
\begin{tabular}{l c | c c c c c c c |}
\toprule
 & & \multicolumn{7}{c|}{Task Accuracy (\%) $(\uparrow)$} \\
Method & Sparsity & MMLU & GSM8K & BBH & GPQA & ARC-C & Wino & Hella \\
\hline
Dense (2.5-7B) & 0 & 74.19 & 82.33 & 69.16 & 33.03 & 59.55 & 76.09 & 60.03 \\
Dense (2.5-14B) & 0 & 79.8 & 88.02 & 75.18 & 38.17 & 64.51 & 80.9 & 63.46 \\
Dense (2.5-32B) & 0 & 83.24 & 88.78 & 81.72 & 38.84 & 66.30 & 81.22 & 65.12 \\
Dense (2.5-72B) & 0 & 86.06 & 89.54 & 85.01 & 42.63 & 68.52 & 82.16 & 67.63 \\
\hline 
SparseGPT (2.5-7B)  & 2:4 & 56.91 & 36.69 & 46.31 & 29.69 & 43.43 & 68.35 & 47.04 \\
Wanda (2.5-7B) & 2:4 & 52.21 & 31.00 & 41.39 & 25.45 & 37.80 & 63.37 & 43.81 \\
NoWag-P (2.5-7B)  & 2:4 & 53.51 & 28.28 & 39.98 & 27.23 & 39.16 & 64.01 & 44.33 \\
\gr \method (2.5-7B)  & 2:4+4.95$\%$ & \textbf{65.56} & \textbf{53.28} & \textbf{55.11} & \textbf{31.47} & \textbf{48.63} & \textbf{70.96} & \textbf{51.67} \\
\hline
SparseGPT (2.5-14B) & 2:4 & 64.21 & 46.55 & 56.44 & 31.92 & 48.21 & 71.51 & 49.73 \\
Wanda (2.5-14B) & 2:4 & 59.98 & 49.36 & 54.51 & 29.91 & 46.16 & 69.38 & 48.24 \\
NoWag-P (2.5-14B) & 2:4 & 58.45 & 45.87 & 52.1 & 30.36 & 43.77 & 68.67 & 48.42 \\
\gr \method (2.5-14B )& 2:4+4.17$\%$ & \textbf{70.55} & \textbf{67.17} & \textbf{67.24} & \textbf{33.48} & \textbf{53.75} & \textbf{74.82} & \textbf{55.18} \\
\hline
SparseGPT (2.5-32B) & 2:4 & 75.26 & 66.03 & 71.31 & 30.36 & 57.08 & 79.01 & 55.73 \\
Wanda (2.5-32B) & 2:4 & 75.45 & 72.93 & 70.17 & 35.27 & 55.72 & 77.82 & 55.71 \\
NoWag-P (2.5-32B) & 2:4 & 74.89 & 68.69 & 69.53 & 27.01 & 55.29 & 76.64 & 55.69 \\
\gr \method (2.5-32B )& 2:4+3.44$\%$ & \textbf{78.18} & \textbf{78.77} & \textbf{76.56} & \textbf{39.51} & \textbf{60.15} & \textbf{79.32} & \textbf{59.78} \\
\hline
SparseGPT (2.5-72B) & 2:4 & 78.71 & 72.27 & 75.31 & 27.46 & 62.29 & 80.03 & 58.71 \\
Wanda (2.5-72B) & 2:4 & 79.61 & 75.66 & 75.96 & 23.88 & 62.37 & 80.19 & 59.43\\
NoWag-P (2.5-72B) & 2:4 & 78.93 & 75.13 & 76.04 & 28.35 & 60.84 & 79.08 & 59.31 \\
\gr \method (2.5-72B )& 2:4+2.4$\%$ & \textbf{82.40} & \textbf{82.11} & \textbf{79.42} & \textbf{40.40} & \textbf{63.40} & \textbf{80.90} & \textbf{62.64} \\
\hline
\end{tabular}
}
\caption{Results of Qwen-2.5 7B/14B/32B/72B. The additional $o\%$ for \method pruned models represent the relative overhead of the block diagonal matricies. ARC-C is short for ARC-Challenge, Wino is short for WinoGrande, Hella is short for HellaSwag. (2.5-7B) denotes Qwen 2.5-7B, (2.5-14B) denotes Qwen 2.5-14B, etc.}
\label{tab:Qwen25Results}
\end{center}
\end{table}
\begin{table}[tb]
\begin{center}
\setlength{\tabcolsep}{6.5pt} 
\renewcommand{\arraystretch}{1.2}
\small
\resizebox{\textwidth}{!}{
\begin{tabular}{l c | c c c c c c c |}
\toprule
 & & \multicolumn{7}{c|}{Task Accuracy (\%) $(\uparrow)$} \\
Method & Sparsity & MMLU & GSM8K & BBH & GPQA & ARC-C & Wino & Hella \\
\hline
Dense (3-8B) & 0 & 76.82 & 84.91 & 77.42 & 42.86 & 63.65 & 76.87 & 58.92 \\
Dense (3-14B) & 0 & 80.47 & 84.0 & 78.50 & 39.29 & 66.64 & 79.08 & 61.81 \\
\hline 
SparseGPT (3-8B)  & 2:4 & 55.77 & 33.36 & 52.96 & 32.14 & 44.54 & 66.46 & 44.74 \\
Wanda (3-8B) & 2:4 & 55.75 & 27.45 & 46.32 & 29.46 & 41.55 & 62.75 & 42.92 \\
NoWag-P (3-8B)  & 2:4 & 54.1 & 28.28 & 43.37 & 28.57 & 40.02 & 61.48 & 42.78 \\
\gr Ours (3-8B)  & 2:4+5.03$\%$ & \textbf{66.22} & \textbf{50.8} & \textbf{60.13} & \textbf{33.93} & \textbf{50.34} & \textbf{68.59} & \textbf{49.55} \\
\hline
SparseGPT (3-14B) & 2:4 & 64.73 & 48.22 & 61.5 & 27.9 & 53.58 & 71.27 & 50.22 \\
Wanda (3-14B) & 2:4 & 62.93 & 52.16 & 57.53 & 29.02 & 50.51 & 69.14 & 48.61 \\
NoWag-P (3-14B) & 2:4 & 61.69 & 46.63 & 56.11 & 27.46 & 48.89 & 68.27 & 48.42 \\
\gr Ours (3-14B )& 2:4+3.89$\%$ & \textbf{71.43} & \textbf{63.38} & \textbf{68.28} & \textbf{29.91} & \textbf{56.31} & \textbf{74.35} & \textbf{53.77} \\
\hline
\end{tabular}
}
\caption{Results of Qwen-3 8B/14B Base. Same setup as the Qwen 2.5 results, Once again (3-8B) denotes Qwen-3 8B etc}
\label{tab:Qwen3Results}
\end{center}
\vspace{-1pt}
\end{table}
Results of the task based evaluations on Qwen 2.5 (7B/14B/32B/72B) and Qwen 3 (8B/14B) models are shown in Tables \ref{tab:Qwen25Results} and \ref{tab:Qwen3Results} respectively. Across all 7 tasks and all models evaluated, ARMOR consistently and significantly outperforms the state-of-the-art pruning methods. For example, on GPQA with the Qwen 2.5-32B model, \method achieves a score of 39.51, outperforming even the dense model's performance (38.84) and vastly exceeding the next best pruning method, SparseGPT, which scored a 30.36. The performance gains are especially pronounced in reasoning or domain expertise heavy tasks like GSM8K, BBH, GQPA, demonstrating that our factorization approach is more effective at preserving the complex capabilities of the model compared to simply removing weights.
\subsection{Perplexity Based Evaluations}
\begin{table}[tb]
\begin{center}
\setlength{\tabcolsep}{6.5pt} 
\renewcommand{\arraystretch}{1.2}
\small
\resizebox{\textwidth}{!}{
\begin{tabular}{l c | c c c c c | c c c c c | }
\toprule
 & & \multicolumn{5}{c|}{Wikitext 2 $(\downarrow)$} & \multicolumn{5}{c|}{C4 $(\downarrow)$} \\
Method & Sparsity & 2-7B & 2-13B & 2-70B & 3-8B & 3-70B & 2-7B & 2-13B & 2-70B & 3-8B & 3-70B \\
\hline
Dense & 0$\%$ & 5.12 & 4.57 & 3.12 & 5.54 & 2.58 & 6.63 & 6.05 & 4.97 & 7.10 & 5.78\\
\hline 
SparseGPT  & 2:4 & 10.16 & 8.39 & 5.39 & 14.18 & 8.65 & 11.98 & 10.22 & 7.20 & 13.88 & 9.27 \\
Wanda & 2:4 & 11.35 & 8.36 & 5.20 & 22.42 & 8.29 & 13.80 & 10.96 & 7.19 & 21.63 & 9.63 \\
NoWag-P & 2:4 & 11.14 & 8.28 & 5.17 & 24.0 & 7.52 & 13.91 & 11.05 &  7.23 & 23.5 & 9.18\\
\gr \method & 2:4+$o$ & \textbf{7.21} & \textbf{6.37} & \textbf{4.55} & \textbf{10.10} & \textbf{5.95} & \textbf{9.36} & \textbf{8.59} & \textbf{6.44} & \textbf{11.22} & \textbf{7.50} \\
\hline
\end{tabular}
}
\caption{Wikitext 2 and C4 perplexities on Llama-2 7B/13B/70B and Llama-3 8B/70B. Perplexity evaluations performed at 4096 and 8192 context length for Llama-2 and Llama-3 models respectively. $o$ denotes the relative overhead of the block diagonal matricies, which is $4.94\%$, $3.95\%$, $2.42\%$ for the 7B, 13B, and 70B parameter models respectively. Following established notation 2-7B denotes Llama-2 7B etc}
\label{tab:pruning_ppl}
\end{center}
\end{table}
Results of perplexity based evaluations on Llama-2 (7B/13B/70B) and Llama-3 (8B/70B) models are reported in Table \ref{tab:pruning_ppl}. Closely reflecting the task based results, ARMOR consistently and significantly outperforms the state-of-the-art pruning methods in retaining lower perplexity, a key indicator of language modeling quality. For example, on Llama-2-13B evaluated on Wikitext2, ARMOR achieves a perplexity of 6.37. This is a dramatic improvement over the next-best baseline (NoWag-P at 8.28) and represents a reduction of nearly 50\% in the perplexity gap relative to the original dense model. We observe similar substantial gains across all evaluated Llama models and datasets, reinforcing that the ARMOR factorization preserves model quality more effectively than existing 2:4 pruning techniques.
\subsection{Comparisons With Learnable Baselines}
\additionalred{We further compare \method against recent learnable semi-structured pruning methods: RotPruner \citep{chen2025rotpruner} and DenoiseRotator \citep{gu2025denoiserotator}. For these specific comparisons, we adopt the baselines' evaluation protocol: reporting Wikitext-2 perplexity at a sequence length of 2048 (in contrast to the native 4096/8192 used in our main results in Table \ref{tab:pruning_ppl}). We exclude MaskLLM \citep{fang2024maskllm} as it operates in a retraining regime requiring orders of mangitude more data and compute than ARMOR. We also omit Targeted Low-Rank Refinement \citep{shen2025targeted} due to the lack of publicly available code. As shown in Table \ref{tab:AdditionalComparisons}, ARMOR significantly outperforms RotPruner across all models. Against the concurrent work DenoiseRotator, ARMOR consistently outperforms the Wanda-based variant and remains highly competitive with the SparseGPT variant, surpassing it on Llama-2 7B and 13B. Crucially, ARMOR achieves this parity without the rigid inference constraints of rotation-based methods. While DenoiseRotator enforces a fixed computational overhead via orthogonal matrices, ARMOR offers a tunable trade-off between accuracy and latency through the block size hyperparameter $\dblock$.}
\subsection{Inference Efficiency}
\begin{table}[tb!]
\begin{center}
\renewcommand{\arraystretch}{1.2}
\small
\resizebox{\textwidth}{!}{
\begin{tabular}{l | c c c | c c c | c |}
\toprule
 & \multicolumn{3}{c|}{Qwen 2.5 7B} & \multicolumn{3}{c|}{Qwen 2.5 14B} & Batched MatVec \\
 & Tokens/s & Max VRAM & Model Size & Tokens/s & Max VRAM & Model Size  & (ms)\\
 \hline
Dense & 4461 & 32.84GB & 14.23GB & 2013 & 41.13GB & 27.65GB & 9.04\\
2:4 & 5430 (\textbf{1.217x}) & 27.52GB & 8.89GB & 2157 (\textbf{1.071x}) & 30.29GB & 16.81GB & 4.85 (\textbf{1.86}x) \\
\gr \method & 5090 (\textbf{1.141x}) & 28.11GB & 9.25GB & 2096 (\textbf{1.041x}) & 31.32GB & 17.85GB & 5.77 (\textbf{1.57}x)\\
\hline
\end{tabular}
}
\caption{Inference speed, Max VRAM, and Model Size for Dense, naive 2:4 pruning, and \method pruned Qwen 2.5 7B/14B. Qwen2.5 7B and Qwen2.5 14B generation was performed at batch size 2048 and 512 respectively. Rightmost column lists the timings and speedup of batched Matrix Vector multiplication between a batch of 8192 input activations and a dense, 2:4 pruned, and \method factorized matrix for a standard \texttt{gate\_proj} layer of Qwen 2.5 14B.}
\label{tab:speedOverall}
\end{center}
\end{table}

To quantify the practical efficiency of ARMOR, we implemented a PyTorch only proof of concept and benchmarked its generation speed, max VRAM, and model size for Qwen 2.5 7B and 14B (Table \ref{tab:speedOverall}), alongside batched Matrix-Vector timing (batch size 8192) for a Qwen 2.5-14B layer. \method explicitly trades a small fraction of theoretical throughput for improved model quality: for this layer, \method's block diagonal wrappers introduce a 3.4\% additional flop overhead, reducing the theoretical maximum speedup from 2.0$\times$ for naive 2:4 to $\sim$1.87$\times$. Our implementation achieves $\sim$84\% of this theoretical limit, compared to $\sim$93$\%$ for highly optimized native 2:4 kernels. In other words, even using an unoptimized proof of concept, \method retains the majority of the speed and memory benefits of 2:4 sparsity while delivering state-of-the-art recovery.
\subsection{Extensions To N:M and Unstructured}
\method extends naturally to general N:M patterns. Furthermore, while the sparse core update is not computationally tractable for general unstructured sparsity, ARMOR can be applied to unstructured sparsity as well by only performing the continuous update step. In table \ref{tab:SparsityExtension} we present experiments comparing \method with NoWag-P for 50\% unstructured, 4:8, 5:8, and 6:8 patterns Llama-2 7B/13B. ARMOR consistently and significantly outperforms NoWag-P across all sparsity patterns and types. These experiments were ran with significantly less iterations, 5000 for unstructured, 2000 for semi-structured, than the main 2:4 results, as a result they only serve as a \textit{lower bound} on \method's performance on sparsity structures beyond 2:4 sparsity.
\begin{table}[!htb]
\centering
\begin{minipage}[t]{0.45\linewidth}
\centering
\setlength{\tabcolsep}{5pt}
\renewcommand{\arraystretch}{1.2}
\small{
\resizebox{\textwidth}{!}{
\begin{tabular}[b]{l | c c c c|}
\toprule
& \multicolumn{4}{c|}{Wikitext 2 $(\downarrow)$}\\
Method & 2-7B & 2-13B & 2-70B & 3-8B \\
\hline
Dense & 5.47 & 4.88 & 3.32 & 6.13\\
\hline
RotPruner & 9.20 & - & - & 11.65 \\
\makecell[l]{Wanda +\\\quad DenoiseRotator} & 9.88 & 8.77 & 4.85 & 11.41 \\
\makecell[l]{SparseGPT +\\\quad DenoiseRotator} & 8.70 & 6.81 & \textbf{4.64} & \textbf{10.01} \\
\gr ARMOR & \textbf{7.65} & \textbf{6.80} & 4.80 & 11.27 \\
\bottomrule
\end{tabular}
}
}
\vspace{-1.5em}
\caption{\additionalred{Comparison of Wikitext 2 perplexity (context length = 2048) of 2:4 pruned Llama-2 7B/13B/70B and Llama-3 8B between \method, RotPruner and DenoiseRotator methods. Same \method models as Table \ref{tab:pruning_ppl} evaluated at shorter context length.}}
\label{tab:AdditionalComparisons}

\end{minipage}
\hspace{1cm}
\begin{minipage}[t]{0.45\linewidth}
\centering
\setlength{\tabcolsep}{5pt}
\renewcommand{\arraystretch}{1.2}
\small{
\resizebox{\textwidth}{!}{
\begin{tabular}[b]{l | l | cc | cc |}
\toprule
& & \multicolumn{2}{c|}{Wikitext 2 $(\downarrow)$} & \multicolumn{2}{c|}{C4 $(\downarrow)$} \\
Method & Sparsity & 2-7B & 2-13B & 2-7B & 2-13B \\
\hline
Dense & 0$\%$ & 5.12 & 4.57 & 6.63 & 6.05 \\
\hline
NoWag-P & 50$\%$ & 6.37 & 5.49 & 8.27 & 7.35 \\
\gr \method & 50$\%$+$o$ & \textbf{6.13} & \textbf{5.42} & \textbf{8.01} & \textbf{7.23} \\
\hline
NoWag-P & 4:8 & 8.04 & 6.47 & 10.17 & 8.67 \\
\gr \method & 4:8+$o$ & \textbf{6.74} & \textbf{5.96} & \textbf{8.78} & \textbf{7.98} \\
\hline
NoWag-P & 5:8 & 5.89 & 5.14 & 7.61 & 6.80 \\
\gr \method & 5:8+$o$ & \textbf{5.62} & \textbf{5.02} & \textbf{7.29} & \textbf{6.67} \\
\hline
NoWag-P & 6:8 & 5.35 & 4.76 & 6.92 & 6.27 \\
\gr \method & 6:8+$o$ & \textbf{5.26} & \textbf{4.72} & \textbf{6.80} & \textbf{6.23} \\
\bottomrule
\end{tabular}
}
}
\vspace{-1.5em}
\caption{Comparison of \method vs. NoWag-P across 50\% and various N:M sparsity patterns on Llama-2 7B/13B. Relative overhead $o$ is the same as Table \ref{tab:pruning_ppl}.}
\label{tab:SparsityExtension}

\end{minipage}%
\end{table}
\subsection{Ablations}
\begin{figure}[t!]
    \centering
    \includegraphics[width=0.8\linewidth]{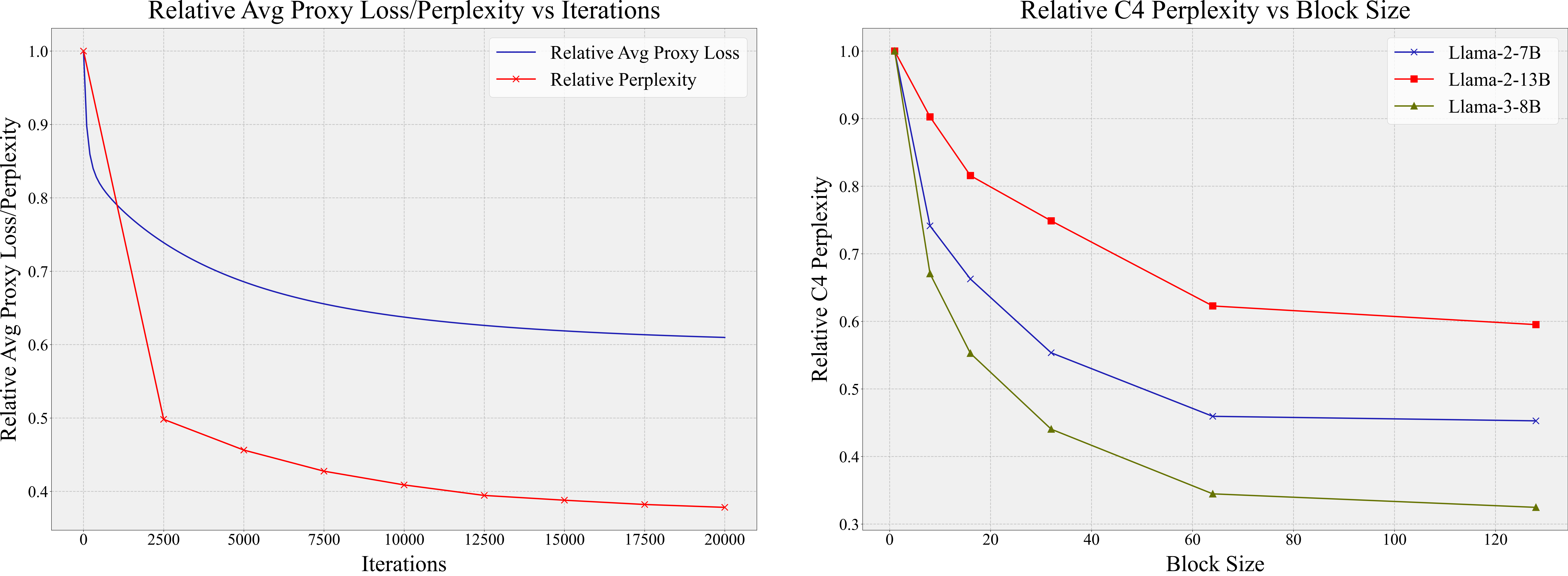}
    \caption{\textbf{Left:} Relative average Proxy Loss and C4 Perplexity of Llama-2 7B across 20,000 iterations of the \method Proxy Loss optimization algorithm with block size 128. \textbf{Right:} Relative C4 Perplexity for Lama-2 7B/13B, and Llama-3 8B across block sizes of 1 (NoWag-P), 8, 16, 32, 64, and 128. ARMOR was ran for 5000 iterations for each block size. Relative perplexity is with respect to initial and optimal (dense) perplexities.}
    \label{fig:ablationsPlot}
\end{figure}
To validate our proxy loss choice, we track its relative value against the model's relative C4 perplexity during 20,000 iterations of the \method proxy loss optimization algorithm. The left plot in Figure \ref{fig:ablationsPlot} shows a strong correlation between the two metrics; as the proxy loss decreases, so does perplexity, confirming its utility as a surrogate for overall model performance. Furthermore, we observe that the majority of the performance loss reduction was achieved within the first 2,500 iterations. We also performed an ablation study on block size to understand its impact on perplexity (Figure \ref{fig:ablationsPlot}, right). This ablation reveals a clear trend across all models: increasing the block size improves performance by lowering perplexity in an exponential decaying manner. Additional ablations are detailed in Appendix \ref{app:Ablations}.

\section{Conclusion}
In this work, we introduce \method, a novel one-shot algorithm that addresses the significant performance degradation of hardware-accelerated 2:4 pruning. Instead of simply removing weights, \method reframes the problem by factorizing each weight matrix into a 2:4 sparse core and adaptive, low-overhead block diagonal wrappers that act as error correctors. This approach is theoretically guaranteed to converge to a solution with a proxy loss less than or equal to state-of-the-art methods and is empirically validated on Llama and Qwen family models, where it consistently and significantly outperforms existing 2:4 pruning techniques on both perplexity and downstream tasks. Crucially, \method achieves these accuracy gains while retaining the majority of the inference speedups and memory reduction of native 2:4 sparsity. Our work demonstrates that rethinking weight representation is a powerful path toward establishing a more effective trade-off between the performance and efficiency of large language models.
\section{Reproducibility Statement}
Our code for \method is available on Github at \url{https://github.com/LawrenceRLiu/ARMOR/tree/main}. To facilitate replication, we have also provided detailed descriptions of our architecture, algorithms, hyperparameter settings, and experimental setup in Appendix \ref{app:impDetails}.
\section{Acknowledgments}
LY and LL are supported in part by NSF Grant 2221871. LY is also supported by an Amazon Faculty Award. 
\bibliography{iclr2026_conference}
\bibliographystyle{iclr2026_conference}

\appendix
\section{Notation in Depth}
\label{app:notation}
As established in section \ref{sec:fact}, for a block diagonal matrix $D \in \R^{d\times d}$ with block size $\dblock$, we denote the individual blocks as:
\[
D = \begin{bmatrix}
\blockM{D}{1} & 0 & \cdots & 0\\
0 & \blockM{D}{2} & \cdots & 0 \\
\vdots & \vdots & \ddots & \vdots\\
0 & 0 & \cdots & \blockM{D}{\frac{d}{\dblock}}
\end{bmatrix}
\]
where $\blockM{D}{i} \in \R^{\dblock\times\dblock} \quad \forall \quad i \in [1,d/\dblock]$. We choose a block diagonal wrappers over diagonal only because diagonal only offers no additional expressivity because it commutes with the element-wise pruning mask, mathematically:
\begin{equation}
\diag{a}(W’\odot M)\diag{b}=W^{\prime\prime}\odot M, \quad \forall \quad a \in \R^{\din} \quad b \in \R^{\dout}
\end{equation}
where $W^{\prime\prime}=\diag{a}W^{\prime}\diag{b}$. In other words, Block-Diagonal matrices are a minimal, hardware acceleratable, structure that enable flexibility, which diagonal only matrices do not.\par

More generally for a matrix $C \in \R^{d_1 \times d_2}$ we denote the $\dblock \times \dblock$ matrix blocks as:
\[
C = \begin{bmatrix}
    \blockM{C}{1,1} & \blockM{C}{1,2}& \cdots & \blockM{C}{1,\frac{d_2}{\dblock}}\\
    \vdots & \vdots & \ddots & \vdots\\
    \blockM{C}{\frac{d_1}{\dblock},1} & \blockM{C}{\frac{d_1}{\dblock},2} &\cdots & \blockM{C}{\frac{d_1}{\dblock},\frac{d_2}{\dblock}}\\
\end{bmatrix}
\]
\section{\method Optimization Algorithm in Depth}
\label{app:AlgoInDetail}
In section \ref{app:DiscInDepth}, we elaborate on the sparse core update step more formally. Additionally we provided pseudo code for both the sparse core update and the continuous parameter update steps in \ref{app:PseudoCode}.
\subsection{The Sparse Core Update Step in Depth}\
\label{app:DiscInDepth}
Decomposing equation \ref{eq:proxy_loss} into independent subproblems at the block matrix level:
\begin{equation}
    \ploss(\params) = \sum_{i=1}\sum_{j=1} \bloss^{(i,j)}(\blockM{\params}{i,j})
    = \sum_{i=1}\sum_{j=1} \norm{\blockM{\normW}{i,j} - \blockM{A}{i}\left(\blockM{W'}{i,j} \odot \blockM{M}{i,j}\right)\blockM{B}{j}}{F,\blockM{\diag{XX^T}}{j}}{2}
    \label{eq:proxyBlock}
\end{equation}
where $i \in [\dout/\dblock]$ and $j\in [\din/\dblock]$. We optimize each block subproblem in parallel, for the remainder of this section we will consider block $(i,j)$. Optimizing these subproblems still requires a sweep over a exponential search space, so we adopt a greedy approach that optimizes a single sparse group for each subproblem. 
\paragraph{Optimizing the Sparse Group} After selecting group $i',k$, we freeze all values beyond $\group{\blockM{\left(W' \odot M\right)}{i,j}}{i'}{k}$, which we optimize with the 2:4 constraint. This is computationally feasible since we only have to sweep over ${4\choose2}=6$ possible mask choices. Let us consider one such mask $m \in \{0,1\}^4 \quad \st \quad \|m\|_0=2$, which has unmasked indices $i_1, i_2 \in [4]$  ie $m_{i_{1}}=m_{i_{2}}=1$. Let us denote the unmasked elements for this mask as $w_m \mathbb{R}^2$, and $k'=4(k-1)$.  For this mask choice, the corresponding best-case proxy loss is given by:
\begin{equation}
    \bloss^{(i,j)}(m) = \min_{w_{m}}\norm{\blockM{\normW}{i,j} - \blockM{A}{i}\blockM{W''}{i,j}\blockM{B}{j} - \blockM{A}{i}_{:,i'}w_{m} \blockM{B}{j}_{\{k'+i_1,k'+i_2\},:}}{F,\blockM{\diag{XX^T}}{j}}{2}
    \label{eq:mask_loss_1}
\end{equation}
where $\blockM{W''}{i,j}$ represents the frozen remainder of the sparse core:
\[
\blockM{W''}{i,j}_{n,p} = \begin{cases}
0 & \text{if } n=i', k'+1\leq p \leq k'+4\\
\blockM{M}{i,j}_{n,p}\blockM{W'}{i,j}_{n,p} & \text{otherwise}
\end{cases} \quad \forall\quad n,p \in [\dblock]
\]
Equation \ref{eq:mask_loss_1} can be arranged into a linear least squares problem with a closed form solution. For ease of notation, let $\deltaW = \blockM{\normW}{i,j} - \blockM{A}{i}\blockM{W''}{i,j}\blockM{B}{j}$, $B'=\blockM{B}{j}_{\{k'+i_1,k'+i_2\},:}$, $a =  \blockM{A}{i}_{:,i'}$ and $\blockM{D}{j} = \blockM{\diag{XX^T}}{j}$. Solving the least squares problem results in an optimal $\ell^*_m$
\begin{equation}
    \ell^*_m = \|\deltaW\|_{F,\blockM{D}{j}}^2 - \frac{1}{\|a\|_2^2}\left(B' \blockM{D}{j} \deltaW^T a\right)^T \left(B' \blockM{D}{j} B'^T\right)^{\dagger}\left(B' \blockM{D}{j} \deltaW^T a\right)
    \label{eq:mask_loss_opt}
\end{equation}
The first term is independent of the mask choice $m$ for the group, thus when sweeping over the possible masks, we only compute the second term for efficiency. The optimum is achieved with unmasked weights $w^{*}_{m}$:
\begin{equation}
    w^{*}_{m} = \frac{1}{\|a\|_2^2} \left(B' \blockM{D}{j} B'^T\right)^{\dagger}\left(B' \blockM{D}{j} \deltaW^T a\right)
    \label{eq:mask_loss_opt_sol}
\end{equation}
After sweeping over all $6$ possible masks, we select the mask $m^{*}$, with nonzero values at $i_1^*, i_2*$, that achieves the minimum best-case proxy loss as given by equation \ref{eq:mask_loss_opt}. We substitute the corresponding optimal $w_{m^*}^*$ given by equation \ref{eq:mask_loss_opt_sol} into $W'$ to fully update sparse core, i.e. we set $\left(\blockM{W'}{i,j}_{\mathrm{new}}\right)_{i',k'+i_1^*}=w_1$ and $\left(\blockM{W'}{i,j}_{\mathrm{new}}\right)_{i',k'+i_2^*}=w_2$. \par
The 2:4 semi-structured sparsity allows for efficient calculation of equations \ref{eq:mask_loss_opt} and  \ref{eq:mask_loss_opt_sol}. Since there are only 2 unmasked values for each group $\left(B' \blockM{D}{j} B'^T\right)$ and $\left(B' \blockM{D}{j} \deltaW^T a\right)$ are of shape $2 \times 2$ and $2 \times 1$. Thus the computational cost for solving for $\left(B' \blockM{D}{j} B'^T\right)^{\dagger}\left(B' \blockM{D}{j} \deltaW^T a\right)$ are negligible. Rather, calculating equations \ref{eq:mask_loss_opt} and \ref{eq:mask_loss_opt_sol} are dominated by calculating $\left(B' \blockM{D}{j} B'^T\right)$ and $\left(B' \blockM{D}{j} \deltaW^T a\right)$, which scale with $\dblock^2$. We update all $\din\dout/\dblock^2$ blocks at once, thus the overall computational complexity is $\mathcal{O}(\din\dout)$, scaling linearly with parameter size.\par 
\subsection{Algorithm Pseudocode}
\label{app:PseudoCode}
\begin{algorithm}[H]
\caption{Continuous Optimization via Sequential Gradient Descent}
\label{alg:continuous_opt}
\begin{algorithmic}[1]
\Require Current parameters $A, B, W'$, mask $M$, normalized weight $\overline{W}$, calibration data $X$.
\Ensure Updated continuous parameters $A, B, W'$.
\Function{ContinuousUpdate}{$A, B, W', M, \overline{W}, X$}
    \State $\hat{W} \gets A(W' \odot M)B$
    \State $L \gets ||\overline{W} - \hat{W}||_{F,diag(XX^T)}$ \Comment{Compute proxy loss from Eq. \ref{eq:proxy_loss}}
    
    \State \Comment{Update block diagonal matrix A}
    \State $\eta_A \gets \text{ComputeLearningRate}(A, L)$ \Comment{Based on local smoothness, Eq. \ref{eq:lrA}}
    \State $A \gets A - \eta_A\nabla_A L$
    
    \State \Comment{Update block diagonal matrix B}
    \State $\eta_B \gets \text{ComputeLearningRate}(B, L)$ \Comment{Eq. \ref{eq:lrB}}
    \State $B \gets B - \eta_B \nabla_B L$
    
    \State \Comment{Update underlying weight matrix W'}
    \State $\eta_{W'} \gets \text{ComputeLearningRate}(W', L)$  \Comment{Eq. \ref{eq:lrWprime}}
    \State $W' \gets W' - \eta_{W'}\nabla_{W'} L$
    
    \State \Return $A, B, W'$
\EndFunction
\end{algorithmic}
\end{algorithm}
\begin{algorithm}[H]
\caption{Greedy Sparse Core Update}
\label{alg:discrete_opt}
\begin{algorithmic}[1]
\Require Current parameters $A, B, W'$, mask $M$, normalized weight $\overline{W}$, calibration data $X$.
\Ensure Updated sparse core parameters $W', M$.
\Function{SparseCoreUpdate}{$A, B, W', M, \overline{W}, X$}
    \For{each block $(i, j)$ in parallel}
        \State \Comment{Select a single 2:4 sparse group to update within the block}
        \For{each sparse group $(i', k)$ in block $(i, j)$}
            \State $p^{(i,j)}_{(i',k)} \propto \norm{\nabla_{\group{\blockM{\left(M\odot W'\right)}{i,j}}{i'}{k}}\ell^{(i,j)}}{1}{}$ \Comment{Calculate selection probability}
        \EndFor
        \State Select group $(i'^*, k^*)$ based on probabilities $p^{(i,j)}$.
        
        \State \Comment{Find the best mask and weights for the selected group}
        \State Let $\mathcal{M}$ be the set of all $\binom{4}{2}=6$ possible masks for a group of 4.
        \State $l_{best} \gets \infty$
        \For{each candidate mask $m \in \mathcal{M}$}
            \State Calculate the resulting loss $l_m^*$ using Eq. \ref{eq:mask_loss_opt}
            \If{$l_m^* < l_{best}$}
                \State $l_{best} \gets l_m^*$
                \State $m^* \gets m$
            \EndIf
        \EndFor
        \State Calculate optimal weights $w_m^*$ for the unmasked entries using Eq. \ref{eq:mask_loss_opt_sol}
        \State \Comment{Update the mask M and weights W' for the chosen group}
        \State Update mask for group $(i'^*, k^*)$ in $M^{(i,j)}$ with $m^*$.
        \State Update corresponding weights in $W'^{(i,j)}$ with $w_m^*$.
    \EndFor
    \State \Return $W', M$
\EndFunction
\end{algorithmic}
\end{algorithm}

\section{Proofs}
\subsection*{Proof for Proposition \ref{prop:ProxyProp}}

\begin{restatable}{prop}{ProxyProp}
    \label{prop:ProxyProp}
    The proxy loss is bounded below by $0$ and is convex with respect to $A$, $B$, and $W'$ individually. A formal proof is provided in Appendix \ref{app:ProxyPropProof} 
\end{restatable}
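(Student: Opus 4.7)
The plan is to prove the two claims separately, with non-negativity being immediate and convexity following from a standard ``affine composed with square, then non-negatively weighted sum'' argument.

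First I would handle non-negativity. The proxy loss in equation~\ref{eq:proxy_loss} is written explicitly as $\sum_i \sum_j (\normW_{ij} - \compW_{ij})^2 \|X_j\|_2^2$. Each $(\normW_{ij} - \compW_{ij})^2$ is a square, hence $\geq 0$, and each weight $\|X_j\|_2^2 \geq 0$. Therefore $\ploss(\theta) \geq 0$ as a non-negatively weighted sum of non-negative terms. This takes one or two sentences.

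The core of the proof is showing separate convexity in $A$, $B$, and $W'$. The key observation is that, for fixed $M$, the compressed matrix $\compW(A,B,W',M) = A(W' \odot M)B$ is linear in each of $A$, $B$, and $W'$ when the other two are held fixed. Explicitly, fixing $B$, $W'$, $M$ gives $\compW = A \cdot C$ for the fixed matrix $C = (W' \odot M)B$; fixing $A$, $W'$, $M$ gives $\compW = C' \cdot B$ for $C' = A(W' \odot M)$; and fixing $A$, $B$, $M$ gives $\compW = A \cdot (W' \odot M) \cdot B$ which is linear in the entries of $W'$ since $W' \mapsto W' \odot M$ is linear and left/right multiplication by fixed matrices preserves linearity. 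From this, each entry $\compW_{ij}$ is an affine function of the variable in question, so $\normW_{ij} - \compW_{ij}$ is affine, and $(\normW_{ij} - \compW_{ij})^2$ is the square of an affine scalar function, hence convex. Finally, $\ploss$ is a non-negatively weighted (by $\|X_j\|_2^2$) sum over $i,j$ of these convex functions, and convexity is preserved under non-negative linear combinations.

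There is no real obstacle here; the whole argument is a textbook use of the facts that (i) affine maps composed with a convex function yield a convex function and (ii) non-negative combinations of convex functions are convex. The only thing worth stating carefully is the linearity of $\compW$ in each argument separately, which relies on $M$ being held fixed so that the Hadamard product $W' \odot M$ is linear in $W'$, and on block-diagonality of $A$ and $B$ being irrelevant to the convexity argument (it is only the linear structure of matrix multiplication that is used). I would write the proof in three short displays, one for each variable, and then combine them in a single concluding sentence.
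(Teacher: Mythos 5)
Your proposal is correct and follows essentially the same route as the paper's proof: non-negativity from a weighted sum of squares, and separate convexity from the fact that $\compW$ is affine in each of $A$, $B$, $W'$ with the others (and $M$) fixed, combined with convexity of the squared weighted norm and closure of convexity under non-negative sums. The only cosmetic difference is that you argue entrywise while the paper composes the affine map with the weighted Frobenius norm directly; these are the same argument.
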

\begin{proof}
\label{app:ProxyPropProof}
The proxy loss is defined as:
$$\ploss(\params) = ||\overline{W} - \hat{W}||_{F,\text{diag}(XX^{T})}^2 = \sum_{i,j} (\overline{W}_{ij} - \hat{W}_{ij})^2 ||X_j||_2^2$$
where $\hat{W} = A(W' \odot M)B$.

\begin{enumerate}
    \item \textbf{Bounded Below by 0:}
    For each term in the summation, $(\overline{W}_{ij} - \hat{W}_{ij})^2 \ge 0$ as it is a squared real number. The weighting term $||X_j||_2^2$ is the squared Euclidean norm of a vector, which is also non-negative. A sum of non-negative terms is non-negative. Thus, $\ploss(\hat{W}) \ge 0$.

    \item \textbf{Convexity with respect to A:}
    When B, $W'$, and M are held fixed, let $S = (W' \odot M)B$. The proxy loss function can be written as a function of A:
    $$\ploss(A) = ||\overline{W} - AS||_{F,D}^2$$
    where $D = \text{diag}(XX^T)$. This is a weighted least squares objective. The function $f(A) = \overline{W} - AS$ is an affine (linear) transformation of A. The function $g(Z) = ||Z||_{F,D}^2$ is a squared weighted Frobenius norm, which is a convex function. The composition of a convex function with an affine transformation is convex. Therefore, the proxy loss is convex with respect to A.

    \item \textbf{Convexity with respect to B:}
    Similarly, when A, $W'$, and M are held fixed, let $S' = A(W' \odot M)$. The proxy loss function can be written as a function of B:
    $$\ploss(B) = ||\overline{W} - S'B||_{F,D}^2$$
    This is also a weighted least squares objective. Using the same reasoning as for A, the function is a composition of a convex function (squared norm) with an affine transformation of B, and thus is convex with respect to B.

    \item \textbf{Convexity with respect to $W'$:}
    When A, B, and M are held fixed, the reconstructed weight matrix $\hat{W} = A(W' \odot M)B$ is a linear function of the elements of $W'$. Let $w' = \text{vec}(W')$, the vector of elements in $W'$. Then $\text{vec}(\hat{W})$ is a linear transformation of $w'$. The objective function $\ploss(\hat{W})$ is a quadratic function of the elements of $\hat{W}$, and therefore a quadratic function of the elements of $W'$. Specifically, it is a positive semidefinite quadratic form, which is convex. Therefore, the proxy loss is convex with respect to $W'$.
\end{enumerate}
This completes the proof.
\end{proof}

\subsection*{Proof for Lemma \ref{lem:ContLemma}}

\begin{restatable}{lemma}{ContLemma}
    \label{lem:ContLemma}
    The continuous parameter update step results in a equal or lower proxy loss: \[\ploss\left(\compW\left(\itervar{A}{t+1},\itervar{B}{t+1},\itervar{W'}{t+1|\mathrm{cont}},\itervar{M}{t}\right)\right) \leq \ploss\left(\compW\left(\itervar{A}{t},\itervar{B}{t},\itervar{W'}{t},\itervar{M}{t}\right)\right).\]
\end{restatable}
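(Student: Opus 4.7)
The plan is to reduce the lemma to three successive applications of the standard descent lemma, one for each of the three sequential sub-updates on $A$, $B$, and $W'$ in Algorithm \ref{alg:continuous_opt}, noting that the mask $M$ is held fixed as $\itervar{M}{t}$ throughout the continuous step.

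First, I would appeal to Proposition \ref{prop:ProxyProp}: with $M$ and the other two coordinate blocks fixed, the proxy loss is convex in each of $A$, $B$, and $W'$ individually. A strictly stronger fact is true and I would make it explicit: in each of these three restrictions, the map from the varying argument to $\compW$ is affine, so the proxy loss is a non-negative convex quadratic in that argument. Consequently each restricted loss is globally $\beta$-smooth with a finite constant $\beta_A$, $\beta_B$, $\beta_{W'}$ respectively, given by the top eigenvalue of the corresponding Hessian. The learning rate rules in Algorithm \ref{alg:continuous_opt} are defined precisely so that $\eta_A\leq 1/\beta_A$, $\eta_B\leq 1/\beta_B$, $\eta_{W'}\leq 1/\beta_{W'}$ (this is what the ``based on local smoothness'' clauses in Eqs.~\ref{eq:lrA}--\ref{eq:lrWprime} encode).

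Next I would invoke the classical descent lemma: for a differentiable $\beta$-smooth function $f$ and step size $\eta\in(0,1/\beta]$, the gradient update $x^{+}=x-\eta\nabla f(x)$ satisfies
\begin{equation*}
f(x^{+}) \;\leq\; f(x) - \tfrac{\eta}{2}\,\|\nabla f(x)\|^{2} \;\leq\; f(x).
\end{equation*}
Applied sequentially to the three sub-updates (with the other coordinates held at their most recent values), this yields the chain
\begin{align*}
\ploss\!\left(\itervar{A}{t+1},\itervar{B}{t},\itervar{W'}{t},\itervar{M}{t}\right)
&\leq \ploss\!\left(\itervar{A}{t},\itervar{B}{t},\itervar{W'}{t},\itervar{M}{t}\right),\\
\ploss\!\left(\itervar{A}{t+1},\itervar{B}{t+1},\itervar{W'}{t},\itervar{M}{t}\right)
&\leq \ploss\!\left(\itervar{A}{t+1},\itervar{B}{t},\itervar{W'}{t},\itervar{M}{t}\right),\\
\ploss\!\left(\itervar{A}{t+1},\itervar{B}{t+1},\itervar{W'}{t+1|\mathrm{cont}},\itervar{M}{t}\right)
&\leq \ploss\!\left(\itervar{A}{t+1},\itervar{B}{t+1},\itervar{W'}{t},\itervar{M}{t}\right).
\end{align*}
Concatenating these three inequalities gives exactly the claim.

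The main obstacle is making rigorous what ``local $\beta$-smoothness'' means in Algorithm \ref{alg:continuous_opt}. For a general nonconvex objective one would need a trust-region or backtracking argument, but here the situation simplifies dramatically because, once the other two coordinate blocks and $M$ are fixed, the restricted objective is a quadratic, so its Hessian is constant and ``local'' smoothness coincides with global smoothness. The only thing to verify carefully is that the explicit learning-rate formulas (Eqs.~\ref{eq:lrA}, \ref{eq:lrB}, \ref{eq:lrWprime}) indeed realize $\eta \leq 1/\beta$ for the current iterate's Hessian; this is a direct linear-algebraic check on each step and does not involve any non-trivial optimization theory.
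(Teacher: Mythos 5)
Your proposal is correct and follows essentially the same route as the paper's proof: convexity and $\beta$-smoothness of each coordinate restriction (Proposition \ref{prop:ProxyProp} and Appendix \ref{app:BetaSmooth}), the descent lemma applied to each of the three sequential updates with step size $\eta \leq 1/\beta$, and the concatenation of the resulting three inequalities. Your added observation that each restricted objective is a quadratic, so ``local'' smoothness is in fact global, is a fair clarification of what the paper leaves implicit, but it does not change the argument.
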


\begin{proof}
\label{app:ContLemmaProof}
The paper states that the continuous optimization step updates the parameters A, B, and $W'$ using sequential gradient descent (Algorithm \ref{alg:continuous_opt}). The process is iterative:


\begin{enumerate}
    \item Update A: $\itervar{A}{t+1} = \itervar{A}{t} - \eta_A \nabla_A \ploss(\itervar{A}{t}, \itervar{B}{t}, \itervar{W'}{t}, \itervar{M}{t})$
    \item Update B: $\itervar{B}{t+1} = \itervar{B}{t} - \eta_B \nabla_B \ploss(\itervar{A}{t+1}, \itervar{B}{t}, \itervar{W'}{t}, \itervar{M}{t})$
    \item Update $W'$: $\itervar{W'}{t+1|\mathrm{cont}} = \itervar{W'}{t} - \eta_{W'} \nabla_{W'} \ploss(\itervar{A}{t+1}, \itervar{B}{t+1}, \itervar{W'}{t}, \itervar{M}{t})$
\end{enumerate}
From Proposition 1, the loss function $\ploss$ is convex with respect to each of A, B, and $W'$ individually. As Algorithm \ref{alg:continuous_opt} states, the learning rate $\eta$ is determined via local $\beta$-smoothness, which are calculated in \ref{app:BetaSmooth}.

For a convex and $\beta$-smooth function $f(x)$, the gradient descent update $x_{k+1} = x_k - \eta \nabla f(x_k)$ with a step size $0 < \eta \leq 1/\beta$ guarantees that $f(x_{k+1}) \le f(x_k)$.

Applying this property to each step of the sequential update:
\begin{enumerate}
    \item The update of A ensures that \[\ploss(\itervar{A}{t+1}, \itervar{B}{t}, \itervar{W'}{t}, \itervar{M}{t}) \le \ploss(\itervar{A}{t}, \itervar{B}{t}, \itervar{W'}{t}, \itervar{M}{t})\]
    \item The update of B, starting from the new A, ensures that \[\ploss(\itervar{A}{t+1}, \itervar{B}{t+1}, \itervar{W'}{t}, \itervar{M}{t}) \le \ploss(\itervar{A}{t+1}, \itervar{B}{t}, \itervar{W'}{t}, \itervar{M}{t})\]
    \item The update of $W'$, starting from the new A and B, ensures that \[\ploss(\itervar{A}{t+1}, \itervar{B}{t+1}, \itervar{W'}{t+1|\mathrm{cont}}, \itervar{M}{t}) \le \ploss(\itervar{A}{t+1}, \itervar{B}{t+1}, \itervar{W'}{t}, \itervar{M}{t})\]
\end{enumerate}
Chaining these inequalities together, we get:
\begin{multline*}
\ploss(\itervar{A}{t+1}, \itervar{B}{t+1}, \itervar{W'}{t+1|\mathrm{cont}}, \itervar{M}{t}) \le \ploss(\itervar{A}{t+1}, \itervar{B}{t+1}, \itervar{W'}{t}, \itervar{M}{t}) \\ \le \ploss(\itervar{A}{t+1}, \itervar{B}{t}, \itervar{W'}{t}, \itervar{M}{t}) \le \ploss(\itervar{A}{t}, \itervar{B}{t}, \itervar{W'}{t}, \itervar{M}{t})   
\end{multline*}

Thus, each continuous optimization step is guaranteed to not increase the proxy loss.
\end{proof}

\subsection*{Proof for Lemma \ref{lem:DiscLemma}}
\begin{restatable}{lemma}{DiscLemma}
    \label{lem:DiscLemma}
    The sparse core update step results in a equal or lower proxy loss: 
    \[\ploss\left(\compW\left(\itervar{A}{t+1},\itervar{B}{t+1},\itervar{W'}{t+1},\itervar{M}{t+1}\right)\right) \leq \ploss\left(\compW\left(\itervar{A}{t+1},\itervar{B}{t+1},\itervar{W'}{t+1|\mathrm{cont}},\itervar{M}{t}\right)\right).\]
\end{restatable}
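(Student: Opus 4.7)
The plan is to leverage the block-decomposable structure of the proxy loss. Since the paper already establishes $\ploss(\theta)=\sum_{i,j}\bloss^{(i,j)}(\blockM{\theta}{i,j})$, and the sparse core update holds $A$ and $B$ fixed while touching only block-local parameters $\blockM{W'}{i,j}$ and $\blockM{M}{i,j}$, the parallel per-block updates cannot interfere across blocks. It therefore suffices to prove that each block's local loss $\bloss^{(i,j)}$ is non-increasing under its own update; summing the resulting $(\din\dout)/\dblock^2$ inequalities then yields the lemma.

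Inside a fixed block $(i,j)$, I would argue as follows. Only the single selected 2:4 group $(i'^*, k^*)$ changes; all other entries of the block are frozen. The algorithm enumerates all $\binom{4}{2}=6$ feasible 2:4 masks for this group and, for each candidate mask $m$, computes the globally loss-minimizing unmasked weights $w_m^*$ in closed form via the least-squares formula in equation \ref{eq:mask_loss_opt_sol}, with corresponding minimum block loss $\ell_m^*$ given by equation \ref{eq:mask_loss_opt}. It then commits to the mask attaining the smallest $\ell_m^*$. The key observation is that the incoming mask $\itervar{M}{t}$ already satisfies the 2:4 constraint on this group, so it is one of the six candidates, and the incoming weights are a feasible choice for the unmasked values under that candidate. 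Hence the minimum under the incoming mask is $\leq$ the incoming block loss, and the minimum over all six candidates can only be smaller still.

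I expect no substantive obstacle; the proof is essentially a monotonicity statement that reduces to the assertion that the current iterate lies in the feasible set being minimized over. The one point that requires care is that the least-squares step must be well-defined even when $B'\blockM{D}{j}B'^T$ is rank-deficient, which the algorithm handles via the Moore--Penrose pseudoinverse in equation \ref{eq:mask_loss_opt_sol}; this guarantees that the best-case loss in equation \ref{eq:mask_loss_opt} is actually attained, so the per-group inequality holds pointwise rather than merely as an infimum. The randomness in the group selection is also immaterial, since the monotonicity holds deterministically for whichever group is drawn, and thus a fortiori in expectation.
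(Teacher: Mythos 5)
Your proposal is correct and follows essentially the same argument as the paper: decompose the loss into independent block subproblems, observe that the incoming mask is among the six candidates and the incoming weights are feasible for it, so the least-squares minimum under the old mask is at most the current loss and the minimum over all six candidates is smaller still. Your added remarks on the pseudoinverse guaranteeing attainment and on the irrelevance of the random group selection are sound refinements the paper leaves implicit.
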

\begin{proof}
\label{app:DiscLemmaProof}
The discrete optimization step (Algorithm 3) seeks to improve the sparse core $W' \odot M$. The total proxy loss is decomposable into independent subproblems for each matrix block $(i, j)$, as shown in Equation \ref{eq:proxyBlock}:
$$\ploss(\params) = \sum_{i=1}^{\dout/\dblock} \sum_{j=1}^{\din/\dblock} ||\overline{W}^{(i,j)} - A^{(i)}((W' \odot M)^{(i,j)})B^{(j)}||_{F, \text{diag}(XX^T)^{(j)}}^2$$
The algorithm updates a single 2:4 sparse group $(i', k)$ within each block $(i, j)$. Let's focus on one such update. Let the loss before the update for this specific group be $l_{before}$. The current mask for this group is $m_{old}$, which is one of the $\binom{4}{2}=6$ possible masks.

The algorithm proceeds as follows:
\begin{enumerate}
    \item For each of the 6 possible masks $m$ in the set of valid masks $\mathcal{M}$, it calculates the optimal weights $w_m^*$ that minimize the loss for that group, assuming that mask $m$ is chosen (Equation \ref{eq:mask_loss_opt_sol}). This results in the best possible loss $l_m^*$ for that mask choice (Equation \ref{eq:mask_loss_opt}).
    \item It then selects the mask $m^*$ that yields the minimum loss among all 6 possibilities: $l_{best} = \min_{m \in \mathcal{M}} l_m^*$.
    \item The algorithm updates the mask for group $(i', k)$ to $m^*$ and its corresponding weights in $W'$ to $w_{m^*}^*.$
\end{enumerate}
The loss with the original mask $m_{old}$ and its weights before the update is $l_{before}$. The optimized loss for this original mask, $l_{m_{old}}^*$, must be less than or equal to $l_{before}$, since Equation 7 finds the optimal weights for any given mask.
$$l_{m_{old}}^* \le l_{before}$$
By definition, the selected mask $m^*$ is the one that minimizes this optimal loss across all 6 choices. Therefore, its loss $l_{best}$ must be less than or equal to the optimal loss for the old mask $l_{m_{old}}^*$.
$$l_{best} \le l_{m_{old}}^*$$
Combining these, we have $l_{best} \le l_{before}$. The update for the selected group can only decrease or maintain the proxy loss. Since this holds for the update in each block, and the block losses are additive, the total proxy loss after the discrete optimization step is guaranteed to be less than or equal to the loss before the step.
\end{proof}

\subsection*{Proof for Theorem \ref{thm:MainTheorem} (Convergence of the \method optimization algorithm)}
\MainTheorem*

\begin{proof}
\label{app:MainTheoremProof}
Let $L_t = \ploss((\params)_t)$ be the value of the proxy loss at the end of iteration $t$. Each iteration of the \method optimization algorithm consists of a continuous optimization step followed by a discrete optimization step.

From Lemma \ref{lem:ContLemma}, the continuous step does not increase the loss. Let the loss after the continuous step at iteration $t$ be $L_{t+1|cont}$. We have:
$$L_{t+1|cont} \le L_t$$
From Lemma \ref{lem:DiscLemma}, the sparse core step, which follows the continuous step, also does not increase the loss. Let the loss after the discrete step be $L_{t+1}$. We have:
$$L_{t+1} \le L_{t+1|cont}$$
Combining these two results gives:
$$L_{t+1} \le L_t$$
This shows that the sequence of proxy losses $\{L_t\}_{t \ge 0}$ is monotonically non-increasing. By induction, this implies that for any $t > 0$, $L_t \le L_0$.

Furthermore, from Proposition \ref{prop:ProxyProp}, the proxy loss is bounded below by 0. Therefore, $\{L_t\}_{t \ge 0}$ is a monotonically non-increasing sequence that is bounded below. By the Monotone Convergence Theorem, any such sequence must converge to a limit.

Therefore, the sequence $\{\ploss(\itervar{\theta}{t})\}_{t\geq 0}$ converges, and its value is always less than or equal to its initial value.
\end{proof}
\section{Beta Smoothness of Proxy Loss}
\label{app:BetaSmooth}
In this section we present the proofs that the proxy loss is $\beta$-smooth with respect to each of the parameters $A$, $B$, $W'$. And using them, derive the learning rates for the sequential gradient descent based continous optimization step.
\subsection{Proxy Loss $\beta$-smooth With Respect to $A$}
First let us consider the $\beta$-smoothness of the proxy loss for block $(i,j)$ with respect to $\blockM{A}{i}$. We have that:
\[
\bloss^{(i,j)}\left(\blockM{\params}{i,j}\right)=\norm{\blockM{\normW}{i,j} - \blockM{A}{i}(\blockM{M}{i,j}\odot \blockM{W'}{i,j})\blockM{B}{j}}{F,\blockM{\diag{XX^T}}{j}}{2}
\]
Where $\blockM{\params}{i,j}=(\blockM{A}{i},\blockM{B}{j},\blockM{W'}{i,j}, \blockM{M}{i,j})$. We have that:
\[
\nabla_{\blockM{A}{i}}\bloss^{(i,j)}\left(\blockM{\params}{i,j}\right) = 2 \blockM{A}{i}\left(\blockM{S}{i,j} \blockM{D}{j} \blockMT{S}{i,j}\right)-2 \blockM{\normW}{i,j} \blockM{D}{j} \blockMT{S}{i,j}
\]

Where $ \blockM{S}{i,j} = (\blockM{M}{i,j}\odot \blockM{W'}{i,j})\blockM{B}{j}$ and $\blockM{D}{j} = \blockM{\diag{XX^T}}{j}$. Consdier two $A_{(1)}, A_{(2)} \in \R^{\dblock \times \dblock}$. Let us denote $\blockM{\params}{i,j}_{(1)} = (\blockM{A}{i}_{(1)},\blockM{B}{j},\blockM{W'}{i,j}, \blockM{M}{i,j})$ and $\blockM{\params}{i,j}_{(2)} = (\blockM{A}{i}_{(2)},\blockM{B}{j},\blockM{W'}{i,j}, \blockM{M}{i,j})$. Then we have that:
Then we have that:
\begin{align*}
    \norm{\nabla_{\blockM{A}{i}}\bloss^{(i,j)}(\blockM{\params}{i,j}_{(1)}) - \nabla_{\blockM{A}{i}}\bloss^{(i,j)}(\blockM{\params}{i,j}_{(2)})}{F}{} & = 2\norm{\left(\blockM{A}{i}_{(1)}-\blockM{A'}{i}_{(2)}\right)\left(\blockM{S}{i,j} \blockM{D}{j} \blockMT{S}{i,j}\right)}{F}{} \\
    & \leq 2\norm{\blockM{A}{i}_{(1)}-\blockM{A'}{i}_{(2)}}{F}{} \norm{\blockM{S}{i,j} \blockM{D}{j} \blockMT{S}{i,j}}{F}{} \\
\end{align*}
Where the inequality follows from the submultiplicativity of the Frobenius norm. Thus we have that the proxy loss $\bloss^{(i,j)}(\blockM{\params}{i,j})$ is $\beta$-smooth with respect to $\blockM{A}{i}$ with $\blockM{\beta_A}{i,j} = 2\norm{\blockM{S}{i,j} \blockM{D}{j} \blockMT{S}{i,j}}{F}{}$. We now consider the overall proxy loss, for some $A_{(1)}, A_{(2)} \in \R^{\dout \times \din}$. Let $\params_{(1)} = (A_{(1)},B,W',M)$ and $\params_{(2)} = (A_{(2)},B,W',M)$. Then we have that:
\begin{align*}
    \norm{\nabla_{A}\ploss(\params_{(1)}) - \nabla_{A}\ploss(\params_{(2)})}{F}{} & = \norm{\sum_{i=1}^{\frac{\dout}{\dblock}}\sum_{j=1}^{\frac{\din}{\dblock}}\nabla_{\blockM{A}{i}}\bloss^{(i,j)}(\blockM{\params_{(1)}}{i,j}) - \nabla_{\blockM{A}{i}}\bloss^{(i,j)}(\blockM{\params_{(2)}}{i,j})}{F}{} \\
    & \leq \sum_{i=1}^{\frac{\dout}{\dblock}}\sum_{j=1}^{\frac{\din}{\dblock}}\norm{\nabla_{\blockM{A}{i}}\bloss^{(i,j)}(\blockM{\params}{i,j}_{(1)}) - \nabla_{\blockM{A}{i}}\bloss^{(i,j)}(\blockM{\params_{(2)}}{i,j})}{F}{} \\
    & \leq 2\norm{A_{(1)}-A_{(2)}}{F}{}\sum_{i=1}^{\frac{\dout}{\dblock}}\sum_{j=1}^{\frac{\din}{\dblock}}\norm{\blockM{S}{i,j} \blockM{D}{j} \blockMT{S}{i,j}}{F}{}
\end{align*}
Where the first inequality follows from the triangle inequality and the second follows from the previous result. Therefore the overall loss $\ploss(\params)$ is $\beta$-smooth with respect to $A$ with $\beta_A = 2\sum_{i=1}^{\frac{\dout}{\dblock}}\sum_{j=1}^{\frac{\din}{\dblock}}\norm{\blockM{S}{i,j} \blockM{D}{j} \blockMT{S}{i,j}}{F}{}$.  This leads us to a learning rate of:
\begin{equation}
    \eta_A = \frac{1}{\beta_A} = \frac{1}{2\sum_{i=1}^{\frac{\dout}{\dblock}}\sum_{j=1}^{\frac{\din}{\dblock}}\norm{\blockM{S}{i,j} \blockM{D}{j} \blockMT{S}{i,j}}{F}{}}
    \label{eq:lrA}
\end{equation}
\subsection{Proxy Loss $\beta$-smooth With Respect to $B$}
Once again, we start by considering the $\beta$-smoothness of the proxy loss for block $(i,j)$ with respect to $\blockM{B}{j}$. Reusing the notation we introduced in section, we have that:
\[
\nabla_{\blockM{B}{j}}\bloss^{(i,j)}(\blockM{\params}{i,j}) = 2\blockMT{S'}{i,j}\blockM{S}{i,j}\blockM{B}{j}\blockM{D}{j} - 2\blockMT{S'}{i,j}\blockM{\normW}{i,j}\blockM{D}{j}
\]
Where $\blockM{S'}{i,j} = \blockM{A}{i}(\blockM{M}{i,j}\odot \blockM{W'}{i,j})$ and $\blockM{D}{j} = \blockM{\diag{XX^T}}{j}$. 
Consdier two $B_{(1)}, B_{(2)} \in \R^{\dblock \times \dblock}$. Let us denote $\blockM{\params}{i,j}_{(1)} = (\blockM{A}{i},\blockM{B}{j}_{(1)},\blockM{W'}{i,j}, \blockM{M}{i,j})$ and $\blockM{\params}{i,j}_{(2)} = (\blockM{A}{i},\blockM{B}{j}_{(2)},\blockM{W'}{i,j}, \blockM{M}{i,j})$. Then we have that:
\begin{align*}
    \norm{\nabla_{\blockM{B}{j}}\bloss^{(i,j)}(\blockM{\params}{i,j}_{(1)}) - \nabla_{\blockM{B}{j}}\bloss^{(i,j)}(\blockM{\params_{(2)}}{i,j})}{F}{} & = 2\norm{\blockMT{S'}{i,j}\blockM{S}{i,j}\left(\blockM{B}{j}_{(1)}-\blockM{B}{j}_{(2)}\right)\blockM{D}{j}}{F}{} \\
    & \leq 2\norm{\blockMT{S'}{i,j}\blockM{S}{i,j}}{F}{} \norm{\blockM{B}{j}_{(1)}-\blockM{B}{j}_{(2)}}{F}{} \norm{\blockM{D}{j}}{F}{} \\
\end{align*}
Where the inequality follows from the submultiplicativity of the Frobenius norm. Thus we have that the proxy loss $\bloss^{(i,j)}(\blockM{\params}{i,j})$ is $\beta$-smooth with respect to $\blockM{B}{j}$ with $\blockM{\beta_B}{i,j} = 2\norm{\blockMT{S'}{i,j}\blockM{S'}{i,j}}{F}{} \norm{\blockM{D}{j}}{F}{}$. Following the same procedure as the $A$ case, consider some $B_{(1)}, B_{(2)} \in \R^{\din \times \din}$. Let $\params_{(1)} = (A,B_{(1)},W',M)$ and $\params_{(2)} = (A,B_{(2)},W',M)$. Then we have that for the overall proxy loss:
\[\norm{\nabla_{B}\ploss(\params_{(1)}) - \nabla_{B}\ploss(\params_{(2)})}{F}{} \leq 
2\norm{B_{(1)}-B_{(2)}}{F}{}\sum_{i=1}^{\frac{\dout}{\dblock}}\sum_{j=1}^{\frac{\din}{\dblock}}\norm{\blockMT{S'}{i,j}\blockM{S}{i,j}}{F}{} \norm{\blockM{D}{j}}{F}{}\]
Therefore the overall loss $\ploss(\params)$ is $\beta$-smooth with respect to $B$ with $\beta_B = 2\sum_{i=1}^{\frac{\dout}{\dblock}}\sum_{j=1}^{\frac{\din}{\dblock}}\norm{\blockMT{S'}{i,j}\blockM{S}{i,j}}{F}{} \norm{\blockM{D}{j}}{F}{}$. This leads us to a learning rate of:
\begin{equation}
    \eta_B = \frac{1}{\beta_B} = \frac{1}{2\sum_{i=1}^{\frac{\dout}{\dblock}}\sum_{j=1}^{\frac{\din}{\dblock}}\norm{\blockMT{S'}{i,j}\blockM{S}{i,j}}{F}{} \norm{\blockM{D}{j}}{F}{}}
    \label{eq:lrB}
\end{equation}
\subsection{Proxy Loss $\beta$-smooth With Respect to $W'$}
Finally, we consider the $\beta$-smoothness of the proxy loss with respect to $W'$. We have that:
\[
\nabla_{W'}\ploss(\params) = \left(2A^TA(W'\odot M)B\diag{XX^T}B^T - 2A^T\bar{W}\diag{XX^T}B^T\right)\odot M
\]
Consdier two $W'_{(1)}, W'_{(2)} \in \R^{\dout \times \din}$. Let us denote $\params_{(1)} = (A,B,W'_{(1)}, M)$ and $\params_{(2)} = (A,B,W'_{(2)}, M)$. Then we have that:
\begin{align*}
    \norm{\nabla_{W'}\ploss(\params_{(1)}) - \nabla_{W'}\ploss(\params_{(2)})}{F}{} & = 2\norm{\left(A^TA\left((W'_{(1)}-W'_{(2)})\odot M\right)B\diag{XX^T}B^T\right)\odot M}{F}{} \\
    & \leq 2\norm{A^TA\left((W'_{(1)}-W'_{(2)})\odot M\right)B\diag{XX^T}B^T}{F}{}\\
    & \leq 2\norm{A^TA}{F}{} \norm{B\diag{XX^T}B^T}{F}{} \norm{(W'_{(1)}-W'_{(2)})\odot M}{F}{} \\
    & \leq 2\norm{A^TA}{F}{} \norm{B\diag{XX^T}B^T}{F}{} \norm{W'_{(1)}-W'_{(2)}}{F}{}
\end{align*}
Where the first and third inequalities follow from the property that $\norm{V\odot M}{F}{} \leq \norm{M}{\mathrm{max}}{} \norm{V}{F}{}$ for any matrices $V,M$ of the same shape, and the second follows from the submultiplicativity of the Frobenius norm. Thus we have that the proxy loss $\ploss(\params)$ is $\beta$-smooth with respect to $W'$ with $\beta_{W'} = 2\norm{A^TA}{F}{} \norm{B\diag{XX^T}B^T}{F}{}$. This leads us to a learning rate of:
\begin{equation}
    \eta_{W'} = \frac{1}{\beta_{W'}} = \frac{1}{2\norm{A^TA}{F}{} \norm{B\diag{XX^T}B^T}{F}{}}
    \label{eq:lrWprime}
\end{equation}
\section{Ablations}
\label{app:Ablations}
\subsection{Selection Heuristic}
\label{app:Selection}
We investigated four choices for the sparse group selection heuristic used in the Discrete Optimization steps:
\begin{itemize}
    \item \textbf{Uniform Random Selection (Random)}. We select a sparse group from the available $\dblock^2/4$ sparse groups within a sparse core block. 
    \item \textbf{L1 Greedy Selection}, We directly select the sparse group based on which group has the maximum L1 gradient norm.
    \item \textbf{L2 Random Selection}, We draw a sparse group with probability that is based on the L2 norm of the gradient, instead of the L1 norm:
    \[
p^{(i,j)}_{(i',k)} \propto \norm{\nabla_{\group{\blockM{\left(M\odot W'\right)}{i,j}}{i'}{k}}\bloss^{(i,j)}}{2}{} \quad \forall \quad i' \in [\dblock], \quad k\in[\dblock/4].
\]
    \item \textbf{L1 Random Selection}, This is the selection heuristic discussed in the main text and that we use.
\end{itemize}
The results of this ablation is listed in table \ref{tab:AblationSelect}. L1 random and L2 random perform roughly equivalently. 
\begin{table}[H]
\begin{center}
\setlength{\tabcolsep}{6.5pt} 
\renewcommand{\arraystretch}{1.2}
\small
\begin{tabular}{l | c c | c c |}
\toprule
 & \multicolumn{2}{c|}{Wikitext 2 $(\downarrow)$} & \multicolumn{2}{c|}{C4 $(\downarrow)$}\\
Method & 2-7B  & 2-13B & 2-7B  & 2-13B \\
\hline
Random & 8.17 & 7.05 & 10.49 & 9.48\\
L1 Greedy & 8.46 & 7.28 & 10.87 & 9.72\\
L2 Random & 7.95 & 7.00 & 10.27 & 9.38 \\
\gr L1 Random & 7.99 & 6.99 & 10.30 &  9.39 \\
\hline
\end{tabular}
\caption{C4 and Wikitext 2 perplexities of \method pruned Llama-2-7B/13B for different sparse group selection heuristics. Block size of 128 was used, and \method optimization was ran for 2000 iterations to expedite runtime.}
\label{tab:AblationSelect}
\end{center}
\end{table}
\subsection{Calibration Dataset}
\begin{table}[H]
\begin{center}
\setlength{\tabcolsep}{6.5pt} 
\renewcommand{\arraystretch}{1.2}
\small
\begin{tabular}{l | c | c |}
\toprule
Dataset & Wikitext 2 $(\downarrow)$ & C4 $(\downarrow)$\\
\hline
RedPajama-Data-1T & 7.59 & 9.89\\
\gr SlimPajama-627B & 7.68 & 9.88\\
\hline
\end{tabular}
\caption{Perplexity of \method pruned Llama-2-7B on Wikitext2 and C4 Datasets when using RedPajama-Data-1T vs SlimPajama-627B for our calibration dataset. \method is minimally sensitive to the choice of dataset, so long as it is representative of the pre-training dataset. Block size of 128 was used, and \method optimization was ran for 5000 iterations.}
\label{tab:calibrationAblation}
\end{center}
\end{table}
\subsection{Calibration Dataset Sample Size}
By default, we use 128 samples from the SlimPajama-627B dataset, with sample length = 4096 for Llama-2 and 8192 for Llama-3 and Qwen models. This amounts to 524.29K and 1.049M tokens respectively. In table \ref{tab:calib_data}, we additionally evaluated the performance of \method pruned Llama-2 7B using 16, 32, and 64 samples from SlimPajama-627B. There is $<1\%$ change in both Wikitext 2 and C4 perplexity across the entire range of sample sizes, indicating that \method is highly data efficient.
\begin{table}[H]
\begin{center}
\setlength{\tabcolsep}{6pt}
\renewcommand{\arraystretch}{1.2}
\small
\begin{tabular}{l | c | c | c|}
\toprule
Num Samples & Tokens & Wikitext 2 $(\downarrow)$ & C4 $(\downarrow)$ \\
\hline
16 & 65.54K & 7.61 & 9.86 \\
32 & 131.07K & 7.66 & 9.86 \\
64 & 262.14K & 7.63 & 9.86 \\
\gr 128 (Default) & 524.29K & 7.66 & 9.93 \\
\bottomrule
\end{tabular}
\caption{Wikitext 2 and C4 perplexities of \method pruned Llama-2 7B using various numbers of calibration dataset samples (sample length = 4096). Block size of 128 was used, and 5000 iterations were used for each number of samples}
\label{tab:calib_data}
\end{center}
\end{table}

\section{Extension to Mixture of Expert Models}
\method supports MoE models out-of-the-box. In Table \ref{tab:moe_results}, we applied \method to the Qwen3-30B-A3B MoE model and evaluated it on MMLU, comparing against NoWag-P. Due to time constraints, we used a highly restricted setup: 1,000 iterations (vs. 20,000 for the main results presented in Tables \ref{tab:Qwen25Results},\ref{tab:Qwen3Results}, and \ref{tab:pruning_ppl}) and block size 32 (vs. 128). We compared this against the NoWag-P baseline on the same model. To account for the sparse routing of MoE architectures, we increased the calibration dataset size to 512 samples (vs. 128) for both \method and the baseline to ensure sufficient data coverage for all experts.
\method outperformed the NoWag-P baseline by +5.83\%. Furthermore, the degradation of \method on this MoE model (~8.7\%) is remarkably consistent with the degradation we observe on dense models (e.g., ~8.6\% for Qwen-2.5-7B in Table 1). This indicates that the \method factorization and optimization strategy is robust across both Dense and Mixture-of-Experts architectures.
\begin{table}[H]
\begin{center}
\setlength{\tabcolsep}{10pt}
\renewcommand{\arraystretch}{1.2}
\small
\begin{tabular}{l | c c |}
\toprule
Method & MMLU (\%) $(\uparrow)$ & Gap $(\downarrow)$\\
\hline
Dense & 81.16 & - \\
\hline
NoWag-P & 66.67 & 14.5\% \\
\gr ARMOR & \textbf{72.50} & \textbf{8.67\%} \\
\bottomrule
\end{tabular}
\caption{\method and NoWag 2:4 pruned Qwen3-30B-A3B MoE model performance on MMLU. Block size 32 and 1000 iterations were used for \method}
\label{tab:moe_results}
\end{center}
\end{table}
\section{Few Shot Task Descriptions}
\label{app:tasks}
\method prund Qwen2.5 and Qwen 3 models were evaluated on an industry standard suite of 7 few shot task benchmarks. The results were reported in  Tables \ref{tab:Qwen3Results} and \ref{tab:Qwen25Results}. A detailed description of each task is included below. HuggingFace accelerate \citep{accelerate} was used for parallel processing, and the EleutherAI LM evaluation harness was used \citep{eval-harness}. \verb|max_model_len| = 4096 for all benchmarks for speedup.

\begin{enumerate}
    \item \textbf{MMLU} \citep{hendryckstest2021, hendrycks2021ethics} --- The Massive Multitask Language Understanding (MMLU) benchmark is a 57-subject multiple-choice benchmark spanning STEM, humanities, social sciences, and professional domains that evaluates broad knowledge and basic reasoning. We report the 5-shot accuracy. (\verb|acc,none|).
    \item \textbf{GSM8K} \citep{cobbe2021gsm8k} --- Grade School Math 8K (GSM8K) is a dataset of 8.5K high quality linguistically diverse grade school math word problems. Problems require no concepts beyond the level of early Algebra, and solutions are provided in natural language. We report the 8-shot strict match: (\verb|exact_match,strict-match|).
    \item \textbf{ARC-c} \citep{clark2018think} --- The challenge subset of the AI2 Reasoning Challenge (ARC), which is composed of grade-school science questions authored for human tests. The challenge subset is composed of questions that baseline algorithms have failed on. We report the 25-shot accuracy. (\verb|acc,none|).
    \item \textbf{HellaSwag} \citep{zellers2019hellaswag} --- The HellaSwag challenge dataset has models choose the most commonsense continuation of everyday scenarios among four options. The benchmark is intentionally challenging for models. We report the 0-shot acurracy \verb|acc,none|.
    \item \textbf{BBH} \citep{suzgun2022challenging} --- BIG-Bench Hard (BBH) is a suite of 23 especially challenging tasks from the Beyond the Imitation Game benchmark (BIG-Bench). These tasks require multi-step reasoning and compositional generalization. We report the 3 shot match (\verb|exact_match,get-answer|).
    \item \textbf{WinoGrande} \citep{sakaguchi2021winogrande} --- WinoGrande is a large dataset of 44k problems inspired by those of the Winograd Schema Challenge, systematically designed to minimize bias. The questions are of a 2-choice fill in the blank format that tests commonsense reasoning. We report the 5-shot accuracy (\verb|acc,none|).
    \item \textbf{GPQA (Main Set)} \citep{rein2024gpqa} --- The Graduate-Level Google-Proof Q\&A benchmark (GPQA) main set is a multiple-choice benchmark of 448 graduate-level biology, physics, and chemistry questions authored by experts. The main set excludes questions that are likely to be not objective and difficult. We report the 5-shot accuracy (\verb|acc,none|).
\end{enumerate}

\section{Implementation Details}
\label{app:impDetails}
We used a block size of $128$ for all models. For optimization, we used a learning rate of $10^{-4}$ for ADAM, and ran for 20,000 iterations. We used 128 samples of the SlimPajama-627B dataset \citep{cerebras2023slimpajama} as our calibration dataset. Each sample had a context length of 4096 for Llama-2 family models, and 8192 for Llama-3, Qwen 2.5, and Qwen 3 family models. Inference benchmarks were performed on a 48GB 4090.

\section{LLM Usage}
We used LLMs to polish and refine our language. Some icons, such as the robot in Figure \ref{fig:overview} and the fire and snowflake icon in Figure \ref{fig:discreteOptim} were generated with generative models. Furthermore, we used LLMs to conduct some literature review to supplement and double check our manual literature review. 

\end{document}